\definecolor{commentcolor}{RGB}{110,154,155}   
\definecolor{CiteBlue}{HTML}{2471a3}
\definecolor{Highlight}{HTML}{C74167}
\newsavebox{\measurebox}
\definecolor{mygray}{gray}{.8}
\definecolor{mypink}{rgb}{.99,.91,.95}
\definecolor{mycyan}{cmyk}{.3,0,0,0}
\newtheorem{proposition}{Proposition}
\newtheorem{lemma}{Lemma}
\newtheorem{corollary}{Corollary}
\definecolor{Gray}{gray}{0.9}
\title{Test-time adaptation for image compression with distribution regularization}
\author{Kecheng Chen$^1$, Pingping Zhang$^2$, Tiexin Qin$^1$, Shiqi Wang$^2$, Hong Yan$^1$ \& Haoliang Li$^1$ \\
$^1$Department of Electrical Engineering and the Centre for Intelligent Multidimensional Data \\Analysis (CIMDA) , City University of Hong Kong, China \\
$^2$Department of Computer Science, City University of Hong Kong, China  \\
\texttt{\{cs.ckc96, ppingyes, tiexinqin\}@gmail.com} \\ \texttt{\{shiqwang, h.yan, haoliang.li\}@cityu.edu.hk} \\ 
}
\begin{document}
\maketitle

\begin{abstract}
 Current test- or compression-time adaptation image compression (TTA-IC) approaches, which leverage both latent and decoder refinements as a two-step adaptation scheme, have potentially enhanced the rate-distortion (R-D) performance of learned image compression models on cross-domain compression tasks, \textit{e.g.,} from natural to screen content images.  However, compared with the emergence of various decoder refinement variants, the latent refinement, as an inseparable ingredient, is barely
 tailored to cross-domain scenarios. To this end, we aim to develop an advanced latent refinement method by extending the effective hybrid latent refinement (HLR) method, which is designed for \textit{in-domain} inference improvement but shows noticeable degradation of the rate cost in \textit{cross-domain} tasks. Specifically, we first provide theoretical analyses, in a cue of marginalization approximation from in- to cross-domain scenarios,  to uncover that the vanilla HLR suffers from an underlying mismatch between refined Gaussian conditional and hyperprior distributions, leading to deteriorated joint probability approximation of marginal distribution with increased rate consumption. To remedy this issue, we introduce a simple Bayesian approximation-endowed \textit{distribution regularization} to encourage learning a better joint probability approximation in a plug-and-play manner. Extensive experiments on six in- and cross-domain datasets demonstrate that our proposed method not only improves the R-D performance compared with other latent refinement counterparts, but also can be flexibly integrated into existing TTA-IC methods with incremental benefits.

\end{abstract}

\section{Introduction}
With rapid developments in data streaming techniques, fruitful high-resolution images need to be transmitted online between edge devices. It is therefore imperative to develop more efficient, effective, and versatile image compression approaches for better storage and transmission. To this end, we have witnessed recent learning-based image compression (LIC) methods~\citep{balle2016end,balle2018variational,cheng2020learned,jiang2023mlic,kim2024c3} significantly outperform conventional codecs, such as VVC~\citep{bross2021overview} and JPEG~\citep{wallace1991jpeg}, in terms of rate-distortion (R-D) performance. Such gains mainly derive from the unprecedented non-linear transform capacity of deep neural networks (DNN) and accurate probability representations for entropy coding, in an end-to-end R-D cost-guided learning framework.

Nevertheless, these DNN-based LIC approaches inevitably inherit the identically and independently distributed (\textit{i.i.d.}) assumption between the source (training data) and target (testing data) domains, which may not always hold in versatile image compression scenarios, \textit{e.g.,} there is a significant distribution gap between natural and screen content images. Compared with in-domain compression (\textit{i.e.,} \textit{i.i.d.} assumption holds), such \textit{domain shifts} would deteriorate the R-D performance of DNN-based codecs in cross-domain compression. 
For example, most advanced LIC models leverage a hyperprior-based entropy framework~\citep{balle2018variational}, where the hyperprior model extracts the side information $z$ of the latent variable $y$ to capture Gaussian conditional probability $p(y|z)$ for coding, and the side information is coding by the hyperprior probability $p(z)$ learned from entropy bottleneck. In cross-domain scenarios, the discrepancy in statistical property between source and target domains will cause inaccuracy or ineffectiveness of learned probability models~\citep{ulhaq2024encodingdistributions}, leading to suboptimal entropy coding with additional bit consumption. Moreover, it is difficult for a source-trained decoder to render high-fidelity target images due to domain shifts.

%

\setlength\intextsep{-3pt}
\begin{wrapfigure}[15]{r}{0cm}
	\centering
	\includegraphics[width=0.37\textwidth]{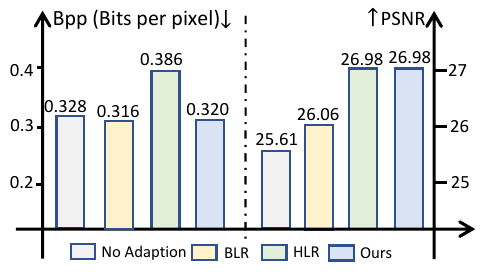}
	\caption{Comparison of various latent refinement methods in R-D performance under cross-domain tasks (testing on SIQAD screen content dataset). The Cheng20~\citep{cheng2020learned} (quality$=0$, pre-trained on natural images) model is used.}
	\label{quality 0}
\vspace{5pt}
\end{wrapfigure}
To this end, it is necessary to conduct a test- or compression-time adaptation for image compression (TTA-IC). One approach~\citep{van2021overfitting,lam2020efficient,zou2021adaptation} is to update the decoder of the entropy model, which involves the transmission of adapted parameters to the decoder side. Although recent parameter-efficient transfer learning (PETL)-based~\citep{hu2021lora} extensions~\citep{shen2023dec,lv2023dynamic} reduce the huge bitrate overhead of parameter transmission to an acceptable level, the entropy optimization of updated parameters and extra bit consumption are still troublesome. Another promising approach~\citep{djelouah2019content} is to directly refine the latent variables without altering any model parameters. Compared with the first approach, such a branch is still effective for many specialized neural decoders, where model parameters are hard-coded and non-modified~\citep{dass2023vitality}.

Here, we focus on the second approach, \textit{i.e.,} latent refinement, due to its simple optimization, friendly bitrate, and natural immunity to catastrophic forgetting. 
Some state-of-the-art (SOTA) TTA-IC methods~\citep{campos2019content,shen2023dec}  actually have introduced basic latent refinement (BLR)~\citep{djelouah2019content} into their two-step adaptation framework including latent and decoder refinements, \textit{e.g.,} updating the latent variable $y$ by R-D objective. Yet, the Gaussian conditional probability $p(y|z)$ and hyperprior $p(z)$ are still inaccurate without refining the side information $z$, leading to suboptimal cross-domain R-D performance. A potential solution may be hybrid latent refinement (HLR)~\citep{yang2020improving} that conducts a joint update of latent variable $y$ and side information $z$. 
Although the vanilla HLR, tailored to \textit{in-domain} inference improvement,  exhibits better R-D performance compared with BLR in \textit{in-domain} TTA-IC tasks~\citep{yang2020improving}, it suffers from a significant downside in rate cost while enhancing the reconstruction quality in \textit{cross-domain} scenarios, as depicted in Figure \ref{quality 0}. This degradation in rate cost is, however, ignored by existing two-step TTA-IC approaches~\citep{lv2023dynamic,tsubota2023universal,shen2023dec} that directly impose the vanilla HLR as their inseparable ingredient.

Motivated by the abovementioned analyses, this study aims to develop an advanced latent refinement method that can be adaptive to cross-domain TTA-IC with \textit{consistent} R-D gains. Such a scheme can not only render better latent representations with \textit{zero} model update and transmission but also enhance the R-D performance of existing SOTA TTA-IC approaches as an effective alternative to existing latent refinement. With these goals in mind, we propose to tailor the vanilla HLR method designed for \textit{in-domain} inference improvement to \textit{cross-domain} cases. To achieve this, a theoretical analysis, in a cue of marginalization approximation from in- to cross-domain scenarios, is provided to reveal the degradation reasons of the vanilla HLR in the cross-domain scenario. In a nutshell, we uncover that the underlying mismatch between refined Gaussian conditional and hyperprior distributions may trigger the deteriorated joint probability approximation of marginal distribution, leading to increased rate consumption. To remedy this issue, we introduce a novel \textit{distribution regularization} to the existing R-D objective, which encourages learning a better joint probability approximation from a theoretical perspective. Moreover, we impose a Bayesian approximation of the proposed distribution regularization to circumvent any model modification in a plug-and-play manner. Experiments on in- and cross-domain tasks demonstrate that our proposed method surpasses other latent refinements approaches and contribute to SOTA TTA-IC models with increased gains.

\section{Related works}

To mitigate the domain gap issue, various studies have been conducted for test- or compression-time adaptation of image compression. One strategy entails fine-tuning the encoder during inference to refine the latent variables without introducing extra bit transmission. For instance, \citet{djelouah2019content} addressed this challenge through the BLR, which employs gradient descent on the latent representation under the guidance of R-D cost. \citet{guo2020variable} proposed a two-step BLR extension that first optimizes the latent variable by R-D cost and further optimizes the side information by the rate cost. Beyond the BLR,  \citet{yang2020improving} proposed the hybrid latent refinement (HLR), which not only conducts a joint latent refinement for latent variable and side information but also eliminates the discretization gap of latent representations. Although extensive experiments \citep{yang2020improving} demonstrated that 
the HLR-based method can significantly enhance the R-D performance compared with BLR-based counterparts for \textit{in-domain} inference improvements, its performance in \textit{cross-domain} scenarios with strong domain shifts lacks comprehensive understanding.

Besides the latent refinement, decoder refinement is a widely used test-time adaptation method for image compression, where the model parameters on the decoder side are updated and transmitted. For example, \citet{van2021overfitting} performed comprehensive parameter updates in the decoder and entropy model using rate-distortion optimization.  \citet{lam2020efficient} solely updated the biases in convolutional layers within the post-processing network. \citet{zou2021adaptation} introduced multiplicative parameters that can overfit and update them for intraframe coding. Despite achievements, these approaches suffer from relatively more bits when compressing a single image due to the updates of many model parameters~\citep{kim2024c3}. To address this issue, 
very recent approaches motivated by parameter-efficient transfer learning (PETL)~\citep{hu2021lora} aimed to introduce the adaptor of the decoder for better reconstruction quality with lower bit consumption~\citep{shen2023dec,tsubota2023universal}. For example, \citet{lv2023dynamic} proposed to make a marriage between latent refinement and low-rank adaptation of the decoder, thereby achieving SOTA performance on cross-domain tasks.

In a nutshell, recent SOTA TTA-IC approaches \citep{shen2023dec,tsubota2023universal,lv2023dynamic} mainly leverage a combination of latent refinement and PETL-based decoder refinement. However, compared with the emergence of various
PETL-based decoder refinement variants, the latent refinement is barely tailored to cross-domain scenarios, thereby exhibiting underexplored space.  


\section{Methodology}
We first explain existing learned image compression from a perspective of marginalization approximation in the context of in-domain R-D cost-guided training. Then, we analyze the practical marginalization approximation when applying the learned image codec to cross-domain image compression. We further render a simple solution to achieve better R-D performance for cross-domain image compression based on the analysis of practical marginalization approximation.

\subsection{In-domain Marginalization Approximation}


First, we introduce the marginalization approximation used by existing entropy models~\citep{balle2018variational,cheng2020learned,zou2022devil}. Specifically, the true (unknown) marginal distribution $p(y)$ of the latent variable $y$ is usually approximated by observed data $y$ conditioned 
on its hyper latent variable $z$ (\textit{a.k.a.}, side information), which makes the conditional distribution $p(y|z)$ tractable, where hyperprior models are adopted to render such hyper latent variable $z$. In the context of hyperprior-based entropy models, the following result holds,
 \begin{lemma}[\citet{balle2018variational}]
 \label{lemma1}
    Let  $p(y|z)$ and $p(z)$ be accessed, a joint probability over $y$ and $z$ can be constructed to approximate the true marginal probability over $y$,
    \begin{equation}
        p(y) = \int p(y,z) dz,  \quad p(y,z) = p(y|z) \cdot p(z), \quad s.t., y = g_{a}(x), z = h_{a}(y),
    \end{equation}
    where $x$, $g_{a}(\cdot)$, and $h_{a}(\cdot)$ denote a raw image, an analysis transform function of the entropy model, and an analysis transform function of the hyperprior model.
 \end{lemma}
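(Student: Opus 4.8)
The plan is to derive the claimed identity as an immediate consequence of two elementary facts from probability theory---the product (chain) rule and the law of total probability---treating $p(z)$ and $p(y\mid z)$ as the densities supplied by the hyperprior model and the Gaussian conditional model, respectively, and $y = g_{a}(x)$, $z = h_{a}(y)$ as the deterministic outputs of the two analysis transforms.

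First I would define the joint density via the factorization $p(y,z) = p(y\mid z)\,p(z)$, i.e.\ the product rule. Before marginalizing, I would check that this product is a legitimate joint density: nonnegativity is inherited from the two nonnegative factors, and normalization follows from $\int\!\int p(y\mid z)\,p(z)\,dy\,dz = \int p(z)\left(\int p(y\mid z)\,dy\right)dz = \int p(z)\,dz = 1$, using that $p(y\mid z)$ integrates to one for each fixed $z$ and that $p(z)$ is itself normalized.

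Next I would integrate out the hyperlatent to recover the marginal, $p(y) = \int p(y,z)\,dz = \int p(y\mid z)\,p(z)\,dz$, which is precisely the law of total probability. This reproduces the displayed equation, with the constraints $y = g_{a}(x)$ and $z = h_{a}(y)$ simply recording how the two variables are produced along the encoding pipeline.

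The part that needs care is not the algebra but the interpretation of the word ``approximate.'' The marginalization is an \emph{exact} identity for the constructed joint; the approximation enters only because the modeled factors $p(z)$ and $p(y\mid z)$ stand in for the true (unknown) hyperprior and conditional, so the resulting $p(y)$ approximates, rather than equals, the true marginal of the latent. I would therefore emphasize that the lemma pins down the exactness of the marginalization step while deferring the quality of the approximation to how faithfully the two learned distributions match their targets---an observation that directly motivates the subsequent cross-domain analysis, where exactly this faithfulness is what deteriorates.
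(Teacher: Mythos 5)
The paper offers no proof of this lemma at all---it is stated with a citation to prior work (Ball\'e et al., 2018) and used as a premise for everything that follows---so there is nothing internal to match your argument against; what you give is the standard derivation behind the cited result, and it is correct: the product rule defines the joint, your normalization check confirms it is a legitimate density, the law of total probability gives the exact marginalization, and your closing observation correctly locates the ``approximate'' in the learned factors $p(z)$ and $p(y\mid z)$ standing in for the true ones (indeed, since $z=h_{a}(y)$ is deterministic, the true conditional is degenerate, which is exactly why the factorized hyperprior model can only approximate the true marginal). Your interpretation also aligns with how the paper uses the lemma downstream, where the cross-domain analysis in Proposition 2 hinges precisely on the deterioration of this faithfulness.
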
Lemma \ref{lemma1} implies that the compression of the raw image involves the compression of $z$ using the learned prior distribution $p(z)$ and further compression of $y$ using the learned conditional distribution $p(y|z)$. Practically, we implement  $p(z)$ by a non-parametric, fully-factorized density model (\textit{a.k.a.}, entropy bottleneck)~\citep{balle2016end}, and implement $p(y|z)$ by a mean-scale Gaussian  model, \textit{i.e.,}
\begin{equation}
    p(z) = p(z|\mathbf{\varphi}), p(y|z) = p(y|z,\theta_{h_{e}})=\mathcal{N}(\mu,\sigma), \mu,\sigma = h_{e}(z;\theta_{h_{e}}), \label{distribution formulate}
\end{equation} where $\varphi$ denotes the learnable parameters of the density model characterized by the univariate distribution, and $h_{e}(\cdot)$ denotes a synthesis transform function of the hyperprior model parameterized by learnable $\theta_{h_{e}}$. The learning of these parameterized probability models derives from training data, which means that the \textit{i.i.d.} assumption between training and testing data is implicitly needed. We therefore call such marginalization approximation as the \textit{in-domain} one.

The widely used R-D objective can also be formulated  by maximizing the probability of joint distribution $p(y,z)$ and the posterior distribution $p(x|y,\theta_{g_{e}})$, \textit{i.e.,}
\begin{eqnarray}
    \mathcal{L}_{rd} = -\log [ p(y|z)\cdot p(z)] + \lambda (-\log p(x|y)) 
    = \underbrace{-\log p(y|z) - \log p(z)}_{Rate} + \underbrace{\lambda \|x - g_{e}(y)\|_{2}^{2}}_{Distortion}, \label{rd-in-domain}
\end{eqnarray}
where $g_{e}(\cdot)$ is a synthesis transform function to reconstruct the image $x$. Minimizing Eq. (\ref{rd-in-domain}) can contribute to an optimal joint probability $p(y,z)$, which approximates the marginal distribution $p(y)$, \textit{i.e.}, Lemma \ref{lemma1} holds.  However, such marginalization approximation requires more information to be encoded~\citep{townsend2019practical}. Theoretically, we can quantify extra encoding information of marginalization approximation using the entropy $H(\cdot)$ of distribution as the rate consumption, \textit{i.e.,}
\begin{proposition}
\label{prop1}
Let $y$ and $z$ be the latent and hyper latent variables, and these variables with the asterisk be their optimal representations. In the context of in-domain image compression, if an optimal joint probability approximation of true marginal distribution can be achieved by minimizing Eq. (\ref{rd-in-domain}), the extra rate consumption of marginalization approximation is
 \begin{eqnarray}
   \Delta H^{*} = H(y,z) - H(y^{*}) =  - \log p(z|y)   \label{delat h-1},
\end{eqnarray}
\end{proposition}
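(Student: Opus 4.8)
The plan is to read the entropy symbol $H(\cdot)$ as the one-shot coding rate, i.e.\ the self-information $-\log p(\cdot)$ of the quantity being transmitted, so that $H(y,z)=-\log p(y,z)$ is the cost of jointly transmitting the latent and side information under the learned joint model, while $H(y^{*})=-\log p(y^{*})$ is the idealized cost of transmitting $y$ directly under the true marginal $p(y)$. With this reading the claimed identity reduces to the probability chain rule combined with the optimality hypothesis, so the heart of the argument is conceptual rather than computational.

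First I would invoke the factorization of Lemma~\ref{lemma1}, but in the complementary direction to the one stated there: rather than $p(y,z)=p(y|z)\cdot p(z)$, I would use $p(y,z)=p(z|y)\cdot p(y)$, since the goal is to isolate the marginal $p(y)$. Substituting into the joint rate and then subtracting $H(y^{*})$ gives
\begin{equation}
\Delta H^{*} = H(y,z) - H(y^{*}) = -\log p(z|y) - \log p(y) + \log p(y^{*}).
\end{equation}
The decisive step is the optimality hypothesis: the statement assumes that minimizing the in-domain R-D objective of Eq.~(\ref{rd-in-domain}) drives the joint model to an \emph{optimal} approximation of the true marginal, so that the approximating marginal $\int p(y,z)\,dz$ coincides with the true marginal evaluated at the optimal representation $y^{*}$, i.e.\ $p(y)=p(y^{*})$ at optimality. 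Under this equality the residual term $\log p(y^{*})-\log p(y)$ vanishes, leaving exactly $\Delta H^{*}=-\log p(z|y)$, the excess cost of encoding the side information $z$ conditioned on $y$.

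I expect the genuine content — and the only real obstacle — to lie in justifying this last equality rather than in the algebra. One must argue carefully that ``optimal joint probability approximation'' truly forces $p(y)=p(y^{*})$, and not merely the inequality $H(y,z)\ge H(y^{*})$ that always holds by the data-processing/marginalization gap; this is precisely where the assumption that the R-D minimizer \emph{attains} the true marginal does the work, and it is worth flagging that in practice this is an idealization rather than a generic consequence of optimization.

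A secondary point I would make explicit is the interpretation of $H(\cdot)$. As written, the identity is a per-sample surprisal statement, with $H(y,z)=-\log p(y,z)$ treated pointwise; if instead one takes expectations over the data, the same chain-rule decomposition recovers the familiar $H(y,z)-H(y)=H(z|y)=\mathbb{E}[-\log p(z|y)]$, the conditional-entropy version of the claim. Clarifying which of these two readings is intended removes the only ambiguity in the statement, after which the result follows immediately from the two ingredients above.
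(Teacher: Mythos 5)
Your proof is correct, and it reaches the identity by a complementary decomposition to the paper's. The paper keeps the joint in the form $H(y,z)=-\log p(y|z)-\log p(z)$ and instead expands the target $H(y^{*})$ via Bayes' rule, $p(y^{*})=p(y^{*}|z^{*})p(z^{*})/p(z^{*}|y^{*})$; the optimality hypothesis is then invoked \emph{factor-wise}, as the two cancellations $p(y|z)=p(y^{*}|z^{*})$ and $p(z)=p(z^{*})$, leaving $-\log p(z^{*}|y^{*})$, which a final (implicit) appeal to the same optimality identifies with $-\log p(z|y)$. You instead apply the chain rule directly to the joint, $p(y,z)=p(z|y)\,p(y)$, and need only the single \emph{marginal-level} equality $p(y)=p(y^{*})$ — a weaker hypothesis than the paper's two factor-wise equalities, and one that lands on $-\log p(z|y)$ in unstarred variables without the extra identification step. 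In that sense your route is leaner. What the paper's bulkier organization buys becomes clear in Corollary \ref{coro1}: its three-term intermediate expression for $\Delta H^{*}$ (Eq.~(\ref{delat h-1-2})) is reused verbatim in the cross-domain case, where the two cancellations flip into strict positivity (Eqs.~(\ref{go to zero1})--(\ref{go to zero2})), yielding $\Delta H>\Delta H^{*}$ term by term; your decomposition would require rederiving that comparison. Your closing remarks are also well placed: the paper indeed uses $H(\cdot)$ in the pointwise self-information sense (visible in its writing $H(y,z)=-\log p(y|z)-\log p(z)$), and you are right that the equality $p(y)=p(y^{*})$ is exactly where the ``optimal joint probability approximation'' assumption does all the work — the paper treats its factor-wise analogues as holding ``due to the assumption,'' i.e.\ as the same idealization you flag, not as a consequence of optimization.
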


\begin{proof}
\vspace{-0.2cm}
On the one hand, with joint probability and the Bayesian rule $p(y^{*})=\frac{p(y^{*}|z^{*})p(z^{*})}{p(z^{*}|y^{*})}$, we have 
\begin{eqnarray}
    H(y,z) = [-\log p(y|z) -\log p(z)],  \quad H(y^{*}) = -\log p(y^{*}|z^{*}) -\log p(z^{*}) +\log p(z^{*}|y^{*}).
\end{eqnarray}
Then, we have
\begin{eqnarray}
    \Delta H^{*}&=&[-\log p(y|z) - (-\log p(y^{*}|z^{*}))] + [-\log p(z)- (-\log p(z^{*}))] - \log p(z^{*}|y^{*}).   \label{delat h-1-2}
\end{eqnarray}
On the other hand, with Lemma \ref{lemma1}, we have
\begin{eqnarray}
    -\log p(y|z) - (-\log p(y^{*}|z^{*})) = 0, \quad s.t. \quad p(y|z)=p(y^{*}|z^{*}), \label{go to zero1 prop1}\\ -\log p(z)- (-\log p(z^{*})) = 0,\quad s.t. \quad p(z)  = p(z^{*}) \label{go to zero2 prop1}
\end{eqnarray}
Thus, 
\begin{eqnarray}
    \Delta H^{*} = - \log p(z^{*}|y^{*}) = - \log p(z|y)\label{minimal bit}
\end{eqnarray}
For in-domain image compression, Eq. (\ref{go to zero1 prop1}), Eq. (\ref{go to zero2 prop1}), and Eq. (\ref{minimal bit}) hold, as $ p(y|z) $, $p(z)$, and $p(z|y)$ are close to optimal probability representations $ p(y^{*}|z^{*}) $, $p(z^{*})$, and $p(z^{*}|y^{*})$ due to the assumption of the optimal joint
probability approximation of true marginal distribution.
\vspace{-0.2cm}
\end{proof}
Lemma \ref{lemma1} and Prop. \ref{prop1} are mainly adaptive to in-domain image compression. By this cue, we will discuss practical marginalization approximation and its impact on extra rate consumption in the context of cross-domain image compression.
\subsection{Generalize to cross-domain marginalization approximation}\label{Generalize to cross-domain marginalization approximation}
When we apply the learned image codec to cross-domain scenarios, \textit{e.g.}, assuming the source-domain image as  $x_{s}$ and the target-domain image $x_{t}$,  the \textit{i.i.d.} assumption between source and target domains violates. We can derive that Lemma \ref{lemma1} may not hold due to the following insight:
\begin{proposition}
\label{prop2}
The practical joint probability $p(y_{t},z_{t})$ on cross-domain images will deteriorate to make it not equivalent to an optimal joint probability $p(y_{t}^{*}, z_{t}^{*})$ (applicable to the cross-domain images) that also corresponds to an optimal marginal approximation distribution $p(y_{t}^{*})$, leading to  deteriorated marginalization approximation, \textit{i.e.,}
    \begin{equation}
p(y_{t}^{*}, z_{t}^{*}) \not= p(y_{t},z_{t}),  p(y_{t}^{*}) \not \approx p(y_{t},z_{t}),  p(y_{t},z_{t}) = p(y_{t}|z_{t},\theta_{h_{e}}^{s})\cdot p(z_{t}|\varphi^{s}), \label{joint prob not hold}
\end{equation}
\end{proposition}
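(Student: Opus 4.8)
The plan is to establish the three claims of \propositionname~\ref{prop2} by contrasting the \emph{practical} joint distribution, which is forced to reuse the source-trained parameters $\theta_{h_e}^s$ and $\varphi^s$, against the \emph{target-optimal} joint $p(y_t^*, z_t^*)$ that would require parameters tuned on the target domain. The key observation I would exploit is that the chain of vanishing terms in \propositionname~\ref{prop1}, namely Eq.~(\ref{go to zero1 prop1}) and Eq.~(\ref{go to zero2 prop1}), was valid \emph{only because} the \textit{i.i.d.} assumption made the learned $p(y|z)$ and $p(z)$ coincide with their optima. Once the domain shifts, I would argue that these coincidences break, and the proposition follows by tracing the breakage through the same bookkeeping used in \propositionname~\ref{prop1}.

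First I would fix the parameterization: by Eq.~(\ref{distribution formulate}), the codec evaluates $p(y_t, z_t) = p(y_t\mid z_t, \theta_{h_e}^s)\cdot p(z_t\mid\varphi^s)$, so the last identity of Eq.~(\ref{joint prob not hold}) holds by construction. Next I would characterize the target optimum: $p(y_t^*, z_t^*)$ corresponds to parameters $(\theta_{h_e}^t, \varphi^t)$ that minimize the R-D objective Eq.~(\ref{rd-in-domain}) over the \emph{target} distribution. The decisive step is to show that the source minimizers are generically not target minimizers, so that $p(y_t\mid z_t,\theta_{h_e}^s)\neq p(y_t^*\mid z_t^*)$ and/or $p(z_t\mid\varphi^s)\neq p(z_t^*)$; this is precisely the failure of the side conditions in Eq.~(\ref{go to zero1 prop1})–(\ref{go to zero2 prop1}). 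Because at least one factor then differs, the products differ, giving $p(y_t^*, z_t^*)\neq p(y_t, z_t)$, the first claim. For the second claim, I would recall that the identity $p(y)=\int p(y,z)\,\mathrm{d}z$ of Lemma~\ref{lemma1} faithfully reconstructs the true marginal only when the joint matches the true conditional–prior factorization; since that factorization is now mismatched, the integral of the source-parameterized joint no longer approximates $p(y_t^*)$, yielding $p(y_t^*)\not\approx p(y_t, z_t)$.

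The hard part will be making ``the source minimizer is not the target minimizer'' precise rather than merely plausible. A clean route is to add a mild non-degeneracy (identifiability) assumption: the R-D risk, viewed as a function of $(\theta_{h_e},\varphi)$, admits a unique minimizer that depends non-trivially on the underlying data distribution, so that two distinct domains induce distinct optimal parameters. Under a genuine shift between the source and target images $x_s$ and $x_t$, i.e. the violation of the \textit{i.i.d.} assumption posited in the cross-domain setting, this forces $(\theta_{h_e}^s,\varphi^s)\neq(\theta_{h_e}^t,\varphi^t)$ and hence the factor inequalities above. I would also flag one subtlety: since the proposition is a qualitative ``deterioration'' statement, I would phrase its conclusions as \emph{generic} inequalities (holding outside measure-zero accidental alignments of source and target) rather than as strict inequalities guaranteed in every instance, and I would defer the actual \emph{quantification} of the induced rate penalty—the cross-domain analogue of $\Delta H^*$ in Eq.~(\ref{delat h-1})—to a subsequent result.
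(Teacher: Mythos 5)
Your proposal is correct and takes essentially the same route as the paper, whose own ``proof'' is just a two-sentence heuristic: the shift $p(x_t)\neq p(x_s)$ renders the source-trained factors $p(z_t|\varphi^s)$ and $p(y_t|z_t,\theta_{h_e}^s)$ inaccurate on target data, hence the joint $p(y_t,z_t)$ deteriorates relative to $p(y_t^*,z_t^*)$ and the marginalization approximation $p(y_t^*)\not\approx p(y_t,z_t)$ fails. If anything, your write-up is more careful than the paper's: the identifiability assumption and the generic (outside measure-zero alignments) phrasing make precise exactly the step---``source minimizer $\neq$ target minimizer''---that the paper asserts without justification.
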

\begin{proof}
\vspace{-0.2cm}
    The distribution shifts, \textit{i.e.,} $p(x_{t})\neq p(x_{s})$, would result in the entropy bottleneck $p(z_{t}|\varphi^{s})$ parameterized by $\varphi^{s}$ (which replaces $\varphi$ in Eq. (\ref{distribution formulate}) to emphasize its correlation with source-domain images $x_{s}$) is quite poor at specializing and encoding $z_{t}$ for cross-domain images. Also, $p(y_{t}|z_{t},\theta_{h_{e}}^{s})$ is inaccurate to encode $y_{t}$ due to source image-correlated $\theta_{h_{e}}^{s}$ and unreliable $z_{t}$. Thus, the joint probability  $p(y_{t}, z_{t})$ significantly deteriorates compared with the optimal one $p(y_{t}^{*}, z_{t}^{*})$ and further causes an unfavorable marginalization approximation $p(y_{t}^{*}) \not \approx p(y_{t},z_{t})$.
    \vspace{-0.2cm}
\end{proof} In light of Prop. \ref{prop2}, a cross-domain extension of Prop. \ref{prop1} can be derived as follows,
\begin{corollary} The extra rate consumption of cross-domain marginalization approximation $\Delta H$ will be larger than that of in-domain marginalization approximation, \textit{i.e.,},
$\Delta H > \Delta H^{*}$, due to deteriorated joint probability. \label{coro1}
\end{corollary}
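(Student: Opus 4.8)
The plan is to mirror the decomposition used in the proof of Proposition \ref{prop1}, but now applied to the cross-domain variables $y_{t}$ and $z_{t}$, and then track which terms vanish and which become strictly positive. First I would write the cross-domain analogue of the entropy difference, using the same Bayesian identity $p(y_{t}^{*}) = p(y_{t}^{*}|z_{t}^{*}) p(z_{t}^{*}) / p(z_{t}^{*}|y_{t}^{*})$ for the optimal target marginal:
$$\Delta H = H(y_{t},z_{t}) - H(y_{t}^{*}) = [-\log p(y_{t}|z_{t}) - \log p(z_{t})] - [-\log p(y_{t}^{*}|z_{t}^{*}) - \log p(z_{t}^{*}) + \log p(z_{t}^{*}|y_{t}^{*})].$$
Regrouping the conditional terms and the prior terms separately would give
$$\Delta H = \underbrace{[-\log p(y_{t}|z_{t}) + \log p(y_{t}^{*}|z_{t}^{*})]}_{A} + \underbrace{[-\log p(z_{t}) + \log p(z_{t}^{*})]}_{B} - \log p(z_{t}^{*}|y_{t}^{*}).$$

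The crucial difference from the in-domain case is that the two regrouped terms $A$ and $B$ no longer vanish. In the in-domain proof, Eq. (\ref{go to zero1 prop1}) and Eq. (\ref{go to zero2 prop1}) set these to zero precisely because the learned conditional and prior coincide with their optimal counterparts. Here I would instead invoke Proposition \ref{prop2}: the source-trained models $p(y_{t}|z_{t},\theta_{h_{e}}^{s})$ and $p(z_{t}|\varphi^{s})$ assign strictly smaller likelihood to the cross-domain observations than the optimal target-domain models, i.e. $p(y_{t}|z_{t}) < p(y_{t}^{*}|z_{t}^{*})$ and $p(z_{t}) < p(z_{t}^{*})$. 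Taking negative logarithms flips these into $A > 0$ and $B > 0$, so the deteriorated joint probability forces the cross-domain approximation to pay a strictly positive penalty on top of the optimal posterior term.

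Finally I would compare against $\Delta H^{*} = -\log p(z^{*}|y^{*})$ from Proposition \ref{prop1}. Treating the optimal posterior cost as the common baseline, so that $-\log p(z_{t}^{*}|y_{t}^{*})$ plays the same role for the optimal target representations as $-\log p(z^{*}|y^{*})$ does in-domain, the inequality collapses to $\Delta H = (A+B) + (-\log p(z_{t}^{*}|y_{t}^{*})) > -\log p(z^{*}|y^{*}) = \Delta H^{*}$, driven entirely by $A + B > 0$. The hard part will be justifying this last comparison rigorously: strictly speaking $p(z_{t}^{*}|y_{t}^{*})$ and $p(z^{*}|y^{*})$ are posteriors of different optimal representations, so equating them requires the (mild) assumption that once the optimal target-domain approximation is attained the residual posterior cost matches the in-domain one. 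I would state this explicitly, after which the strict positivity of the deterioration terms $A$ and $B$, guaranteed by Proposition \ref{prop2}, is exactly what yields $\Delta H > \Delta H^{*}$.
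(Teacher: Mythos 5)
Your proposal matches the paper's own proof essentially step for step: the paper likewise reuses the decomposition from Proposition \ref{prop1} and invokes Proposition \ref{prop2} to replace the two vanishing terms of Eqs. (\ref{go to zero1 prop1})--(\ref{go to zero2 prop1}) with strict inequalities, concluding $\Delta H > \Delta H^{*}$ exactly as you do. If anything you are more careful than the paper, which silently identifies the cross-domain optimal posterior term $-\log p(z_{t}^{*}|y_{t}^{*})$ with the in-domain $\Delta H^{*} = -\log p(z^{*}|y^{*})$ (even dropping the $t$ subscripts), whereas you correctly flag this identification as an additional assumption that should be stated.
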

\begin{proof}
\vspace{-0.4cm}
With Prop. \ref{prop2} as a condition, Eqs. (\ref{go to zero1 prop1}) and (\ref{go to zero2 prop1}) in Prop. \ref{prop1} can be further represented as follows,
\begin{eqnarray}
    -\log p(y|z) - (-\log p(y^{*}|z^{*})) > 0, \quad s.t. \quad p(y|z)\neq p(y^{*}|z^{*}), \label{go to zero1}\\ -\log p(z)- (-\log p(z^{*})) > 0,\quad s.t. \quad p(z) \neq p(z^{*}) \label{go to zero2}
\end{eqnarray}
Thus, we have 
\begin{eqnarray}
    \Delta H = [-\log p(y|z) - (-\log p(y^{*}|z^{*}))] + [-\log p(z)- (-\log p(z^{*}))] - \log p(z^{*}|y^{*}) 
    >\Delta H^{*} \label{practical bit}
\end{eqnarray} 
Eq. (\ref{practical bit}) implies more rate consumption is potentially incurred in cross-domain scenarios.
\vspace{-0.3cm}
\end{proof}
Besides, distortion error is increased as the posterior probability $p(x_{t}|y_{t},\theta_{g_{e}}^{s})$ correlated with source-domain images $x_{s}$ is suboptimal for cross-domain images $x_{t}$. To enhance the R-D performance on cross-domain images $x_{t}$, latent refinement is proposed to optimize latent variables while making model parameters unchanged. We summarize existing latent refinement methods as below.

Assume that the initial latent representations $y_{t}$ obtained by analysis transform as $y^{0}_{t}$ and the initial latent representations $z_{t}$ obtained by hyperprior transform as $z^{0}_{t}$.     
The BLR scheme (refers to Figure \ref{framework}) only updates the latent representation $y^{0}_{t}$ with $\mathrm{M}>1$ steps, for each step $m$:
\begin{equation}
    y^{m+1}_{t} = y^{m}_{t} - \epsilon \cdot \frac{\partial \mathcal{L}_{blr}}{\partial y_{t}},\quad \mathcal{L}_{blr}=-\log p(y^{m}_{t}|z^{0}_{t},\theta_{h_{e}}^{s}) + \lambda (-\log p(x_{t}|y^{m}_{t}, \theta_{g_{e}})),\label{blr}
    \end{equation}
where the R-D cost excludes  $- \log p(z)$ in Eq. (\ref{rd-in-domain}) as the latent representation $z^{0}_{t}$ is unchanged. For BLR, the gain of R-D performance is limited, as it is difficult to obtain an updated $y_{t}^{m}$ that can simultaneously lead to minimal rate cost and minimal distortion error in Eq. (\ref{blr}) due to fixed $z_{t}^{0}$. Instead, the HLR in \citep{yang2020improving} not only conducts a joint amortization gap minimization but also eliminates the discretization gap for latent representations using the following optimization step,
\begin{small}
    \begin{gather}
    y_{t}^{m+1} = y_{t}^{m} - \epsilon \cdot \frac{\partial \mathcal{L}_{hlr}}{\partial y_{t}}, z_{t}^{m+1} = z_{t}^{m} - \epsilon \cdot \frac{\partial \mathcal{L}_{hlr}}{\partial z_{t}},  \nonumber \\ \mathcal{L}_{hlr} =-\log p(y_{t}^{m}|z_{t}^{m},\theta_{h_{e}}^{s}) -  \log p(z_{t}^{m}|\varphi^{s}) + \lambda (-\log p(x_{t}|y_{t}^{m})). \label{hlr}
\end{gather}
\end{small}
By simultaneously optimizing $y_{t}$ and $z_{t}$, \citet{yang2020improving} reported significant R-D gains compared with BLR for in-domain adaptive compression. However, when we impose the HLR scheme to cross-domain adaptive compression, such gains are marginal as shown in Figure \ref{quality 0}, where the reconstruction quality of the HLR achieves significant gains compared with BLR, but the HLR consumes more bits. 
\begin{wrapfigure}{r}{0.7\textwidth}
\vspace{3pt}
    \centering
    \begin{minipage}{0.3\hsize}
    \centering
    \includegraphics[width=\hsize]{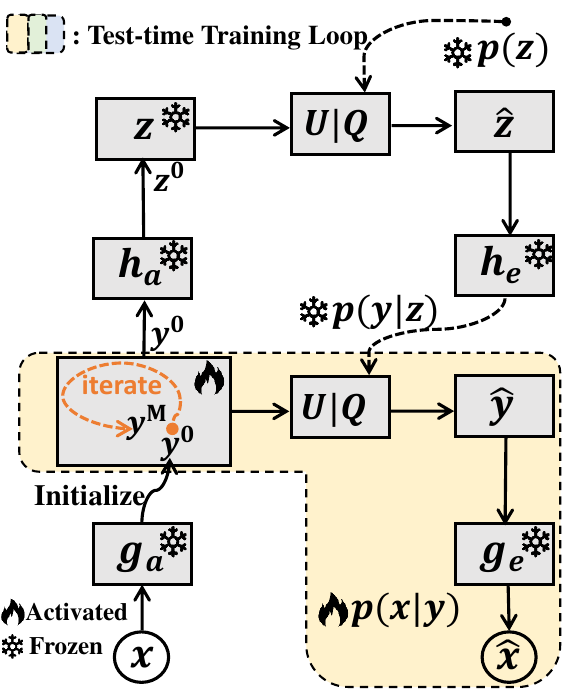}
    \vskip 4px
    \subcaption{BLR}
  \end{minipage}
  \begin{minipage}{0.3\hsize}
    \centering
    \includegraphics[width=\hsize]{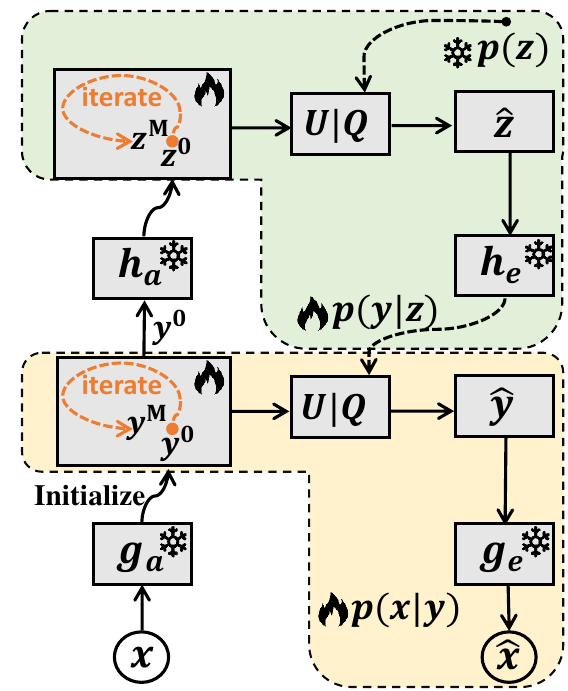}
\vskip 4px
    \subcaption{HLR}
  \end{minipage}
  \begin{minipage}{0.38\hsize}
    \centering
    \includegraphics[width=\hsize]{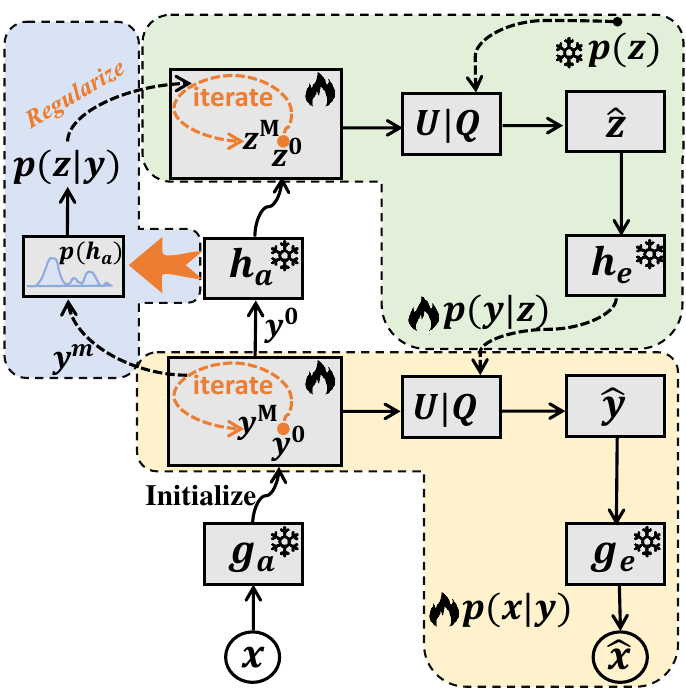}
    \vskip -3px
    \subcaption{Ours}
  \end{minipage}
\caption{Architectures of different latent refinement TTA-IC methods. $\textit{\textbf{U}}|\textit{\textbf{Q}}$ represents the quantization and entropy coding.}
\vspace{1pt}
\label{framework}
\end{wrapfigure} 

\textbf{\textit{Discussion}} \textit{ -- we provide the analyses of degradation reasons of the vanilla HLR in the cross-domain scenario using Cor. \ref{coro1} and Prop. \ref{prop2}.} First, compared with in-domain adaptive compression, the biggest trouble in the cross-domain scenario is that the static entropy bottleneck correlated with the distribution property of source-domain images cannot well model the true distribution of cross-domain latent variable $z_{t}$ as in Prop. \ref{prop2}. Second, although updating $z_{t}^{0}$ to an appropriate $z_{t}^{m}$ that can render a minimal negative log-likelihood of $p(z_{t}^{m}|\varphi^{s})$ is possible, ensuring $z_{t}^{m}$ as an appropriate Gaussian condition of $y_{t}^{m}$ is questionable for a minimal negative log-likelihood of $p(y_{t}^{m}|z_{t}^{m},\theta_{h_{e}}^{s})$. Last but not least, by recalling Porp. \ref{prop2} and Cor. \ref{coro1}, more bit consumption by the HLR implies that the finally updated joint probability $p(y_{t}^{\mathrm{M}},z_{t}^{\mathrm{M}})$ may be further away from the optimal (unknown) joint probability $p(y_{t}^{*},z_{t}^{*})$, and even the initial joint probability $p(y_{t}^{0},z_{t}^{0})$. This can be directly reflected by more rate consumption of marginal approximation by extending the result of Cor. \ref{coro1} to specific  update steps,
\begin{equation}
    \Delta H^{\mathrm{M}} > \Delta H^{0} > \Delta H^{*}.\label{delta h}
\end{equation}
To conclude, the underlying mismatch between $p({z_{t}^{m}|\varphi^{s}})$ and $p(y_{t}^{m}|z_{t}^{m},\theta_{h}^{s})$ (in the first two points) triggers deteriorated joint probability approximation of true marginal distribution (in the third point).


\subsection{Distribution regularization via Bayesian approximation}
Thus, we are interested in an advanced latent refinement method, which not only can significantly improve the reconstruction quality but also can enjoy a mild rate cost after latent refinement. Motivated by cross-domain joint probability approximation in Prop. \ref{prop2} and corresponding impact for rate consumption in Cor. \ref{coro1}, we propose to introduce a  simple yet efficient distribution regularization into the objective of vanilla HLR as follows,
\begin{equation}
    \mathcal{L}_{DR}=-\log p(y_{t}^{m}|z_{t}^{m},\theta_{h_{e}}^{s}) -\log p(z_{t}^{m}|\varphi^{s}) + \uwave{\beta(-\log p(z^{m}_{t}|y_{t}^{m}))} + \lambda (-\log p(x_{t}|y_{t}^{m})), \label{durham}
\end{equation}
where the third term is the distribution regularization has two advantages as follows. 

(i) --- If the balance coefficient $\beta$ is  1, minimizing the entropy of probability estimates  ($i.e.,$ the first two terms in Eq. (\ref{durham})) is equivalent to minimizing the distribution gap between estimated probability $p(y|z)$ or $p(z)$ and (unknown) optimal  probability $p(y^{*}|z^{*})$ or $p(z^{*})$~\citep{mackay2003information}. This coincides with the objectives of the first two terms in Eq. (\ref{practical bit}) of Cor. \ref{coro1}. Thus, $\mathcal{L}_{DR}$ can be derived as follows,
\begin{equation}
    \mathcal{L}_{DR} \propto \Delta H + \lambda (-\log p(x_{t}|y_{t}^{m})). \label{approximation durham}
\end{equation}
By recalling Eq. (\ref{practical bit}) in Cor. \ref{coro1} and Eq. (\ref{delta h}), Eq. (\ref{approximation durham}) implies that the minimization of extra rate consumption of marginalization approximation corresponds to encouraging the deteriorated joint probability approximation to approach the initial and even optimal ones. 
In other words, $\Delta H^{*}$ is the \textit{lower bound} of the first term of Eq. (\ref{approximation durham}), \textit{i.e.,} the better joint probability approximation, the closer to the lower bound. $\mathcal{L}_{DR}$ can potentially remedy the additional rate of consumption of vanilla HLR.

(ii) --- 
For the optimization process in Eq. (\ref{hlr}), 
there is no explicit constraint to ensure that $z_{t}^{m}$ can well match its posterior distribution 
under the condition of $y_{t}^{m}$, leading to deteriorated joint probability. As a practical implementation of the ideal posterior distribution, the proposed distribution regularization in Eq. (\ref{approximation durham}) can eliminate such an issue at test or compression time.

To implement the $\mathcal{L}_{DR}$, it is necessary to model the estimated posterior distribution $p(z^{m}_{t}|y_{t}^{m})$ of $z_{t}^{m}$ given $y_{t}^{m}$. However, 
for existing hyperpiror models, the practical $\hat{z}_{t}^{m}$ conditioned by $y_{t}^{m}$, \textit{i.e.,} $\hat{z}_{t}^{m}=h_{a}(y_{t}^{m};\theta_{h_{a}})$, is a deterministic output without distribution property. Although it is feasible to approximate the $\hat{z}_{t}^{m}$ as the mean $\hat{\mu}_{t}^{m}$ of a Gaussian distribution (as the assumption of posterior distribution) while constructing a variance branch from scratch~\citep{vahdat2020nvae}, the optimization of the variance model may be difficult when model parameters are fixed during latent refinement phase. To address these issues, we utilize the dropout variational inference ~\citep{gal2016dropout} as a  Bayesian approximation to existing hyperprior models for modeling the posterior distribution. Specifically, without introducing any new network, the deterministic pre-trained model $\theta_{h_{a}}$ can be formulated as a probabilistic one $p(\theta_{h_{a}})$, by treating the weights as distributions. Thus, the estimated posterior estimation can be represented as  follows,
\begin{equation}
    p(z_{t}^{m}|y_{t}^{m}) \approx \hat{p}(z_{t}^{m}|y_{t}^{m}) = \mathcal{N}(\hat{\mu}_{t}^{m}, 
    (\hat{\sigma}_{t}^{m})^{2}),
\end{equation}
where the posterior distribution is estimated and assumed as a fully factorized Gaussian distribution. The estimated mean $\hat{\mu}_{t}^{m}$ and variance $(\hat{\sigma}_{t}^{m})^{2}$ can be represented as follows,
\begin{equation}
    \hat{\mu}_{t}^{m} = \frac{1}{T}\sum_{i=1}^{T}h_{a}(y_{t}^{m};\theta_{h_{a}}^{i}), \hat{\sigma}_{t}^{m} = \frac{1}{T}\sum_{i=1}^{T}[h_{a}(y_{t}^{m};\theta_{h_{a}}^{i}) - \frac{1}{T}\sum_{i=1}^{T}h_{a}(y_{t}^{m};\theta_{h_{a}}^{i})]^{2}, \theta_{h_{a}}^{i} \sim q_{\vartheta}(\theta_{h_{a}})
\end{equation}
 For the practical dropout variational inference (DVI)~\citep{gal2016dropout}, the dropout strategy~\citep{srivastava2014dropout} is conducted to render approximate samples from the posterior distribution, which equals to use a Bernoulli variational distribution $q_{\vartheta}(\theta_{h_{a}})$, parameterized by $\vartheta$, to approximate the true model weight posterior $p(\theta_{h_{a}})$. By conducting $T$ times of Monte Carlo (MC) sampling from $q_{\vartheta}(\theta_{h_{a}})$, we can estimate the mean and the variance of the posterior distribution $p(z_{t}^{m}|y_{t}^{m})$. The dropout probability related to the Bernoulli variational distribution is set to 0.5. The computation consumption of the DVI is mild, as a single instance can parallelly sample $T$ masked weights by one inference in a batch of repeated instances. Moreover, it is flexible to degenerate to vanilla deterministic networks when the dropout probability is 1. Finally, we compute the negative log-likelihoods of the estimated posterior distribution given the current latent variable $z_{t}^{m}$.

\label{discussion}
\textit{\textbf{Discussion} -- connection with Bit-back coding (BBC).} Both BBC~\citep{townsend2019practical,ruan2021improving,ho2019compression} and our method derive from the joint probability approximation of marginal distribution. However, BBC usually specializes in \textit{in-domain} image compression to narrow the marginalization gap, \textit{i.e.,} transforming learned optimal joint probability to true marginal probability at compression time by minimizing the following objective:
\begin{equation}
    \mathcal{L}_{BBC}=-\log p(y_{t}^{m}|z_{t}^{m},\theta_{h_{e}}^{s}) -\log p(z_{t}^{m}|\varphi^{s}) - (-\log p(z^{m}_{t}|y_{t}^{m})) + \lambda (-\log p(x_{t}|y_{t}^{m})). \label{bpp}
\end{equation}
For cross-domain scenarios, such optimal joint probability does not hold due to mismatched encoding distribution as discussed in Eqs. (\ref{practical bit}) and (\ref{delta h}).
Instead, we focus on refining deteriorated joint probability to potentially optimal (even initial) one by minimizing the extra rate consumption of marginal approximation, as discussed in Eq. (\ref{approximation durham}). 
More differences refer to  \hyperlink{bbc}{Appendix}.

\section{Experiments}



\textbf{Datasets.} By following previous literature~\citep{lv2023dynamic,tsubota2023universal,shen2023dec}, we collect six different datasets with four types of image styles to comprehensively evaluate the R-D performance of different approaches on cross-domain TTA-IC tasks, including natural image (Kodak\footnote{\textcolor{Highlight}{https://r0k.us/graphics/kodak/}}), screen content image (SIQAD~\cite{yang2015perceptual}, SCID~\citep{ni2017esim}, CCT~\citep{min2017unified}), pixel-style gaming image (\cite{lv2023dynamic}' self-collected), and painting image (DomainNet~\citep{peng2019moment}) datasets. The details of the used dataset can be found in the \hyperlink{sota tta-ic}{Appendix}. Specifically, we consider the natural image dataset as in-domain evaluations, and others as cross-domain evaluations. 

\begin{figure*}[t]
  \centering
  \begin{minipage}{0.32\hsize}
    \centering
    \includegraphics[width=\hsize]{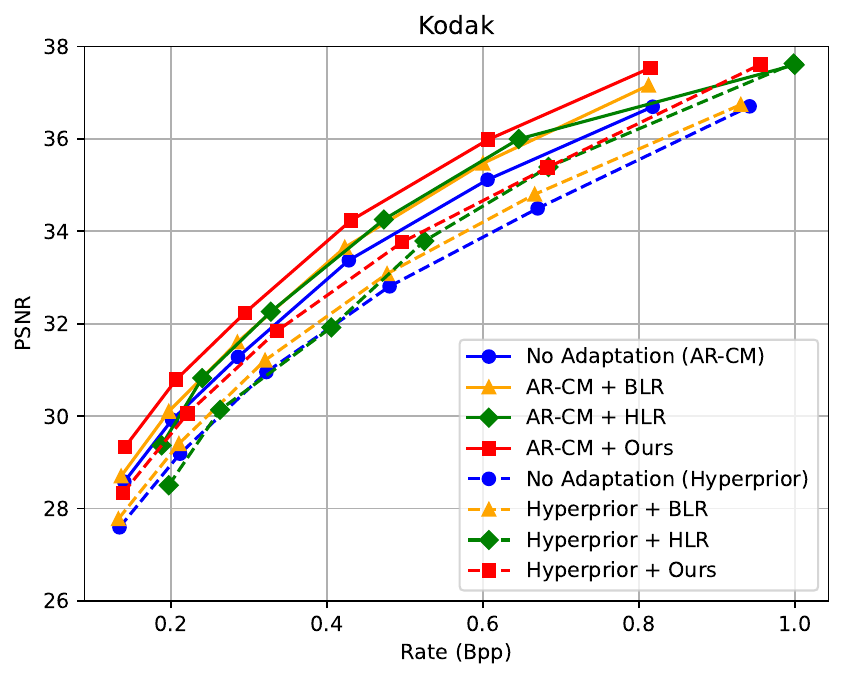}
    \vskip-5pt
    \subcaption{}
  \end{minipage}
  \begin{minipage}{0.32\hsize}
    \centering
    \includegraphics[width=\hsize]{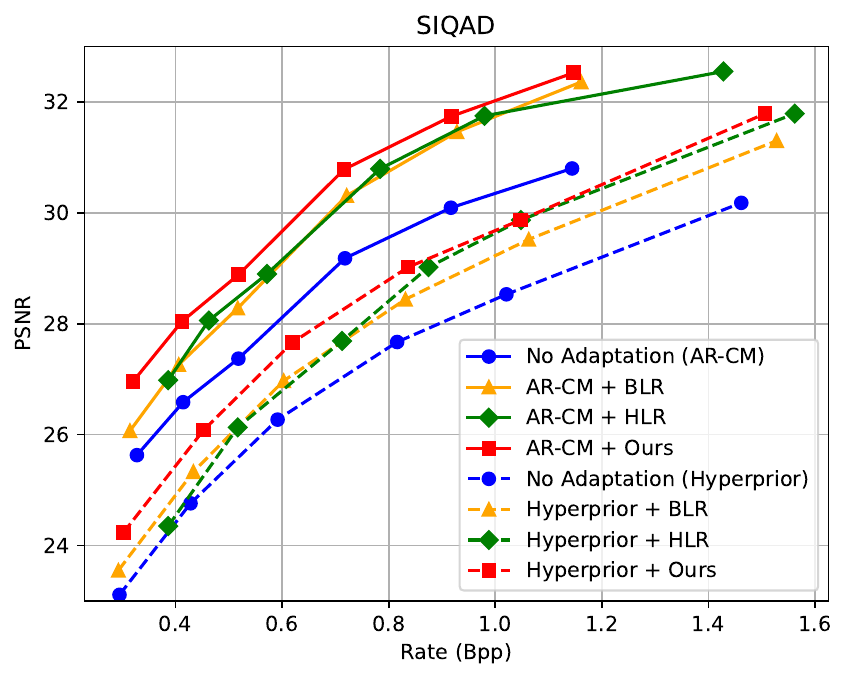}
    \vskip-5pt
    \subcaption{}
  \end{minipage}
  \begin{minipage}{0.32\hsize}
    \centering
    \includegraphics[width=\hsize]{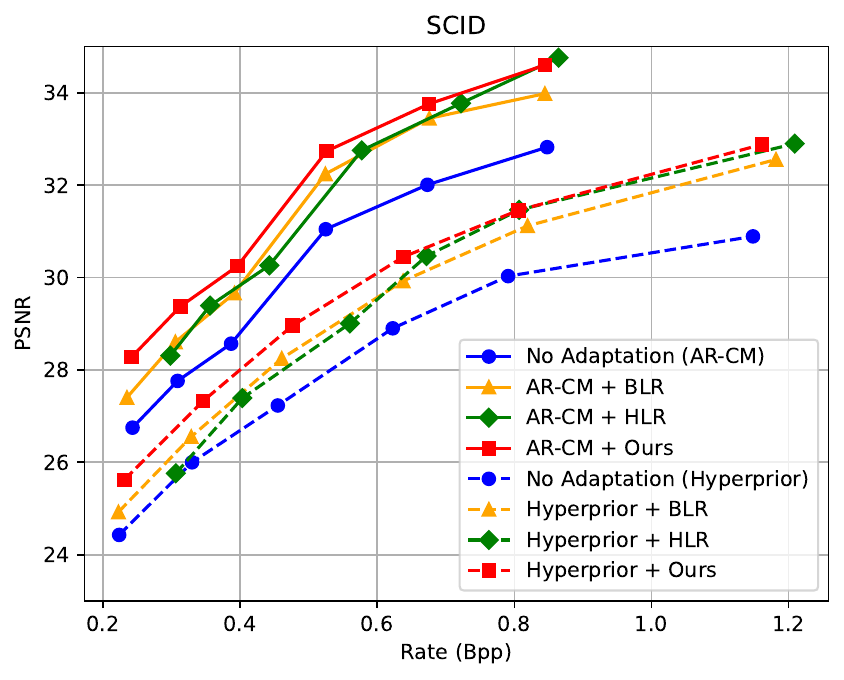}
    \vskip-5pt
    \subcaption{}
  \end{minipage}
  \begin{minipage}{0.32\hsize}
    \centering
    \includegraphics[width=\hsize]{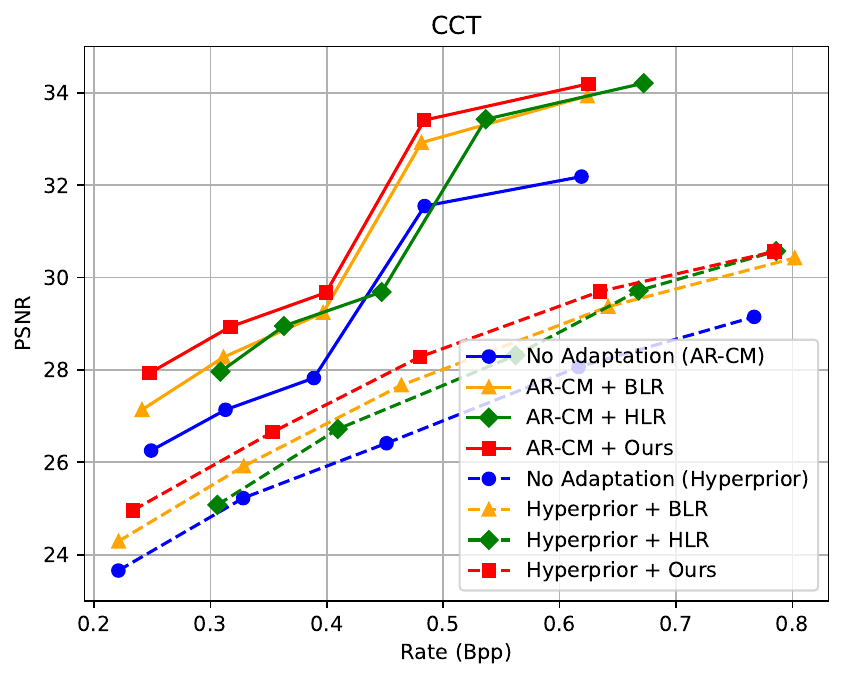}
    \vskip-5pt
    \subcaption{}
  \end{minipage}
  \begin{minipage}{0.32\hsize}
    \centering
    \includegraphics[width=\hsize]{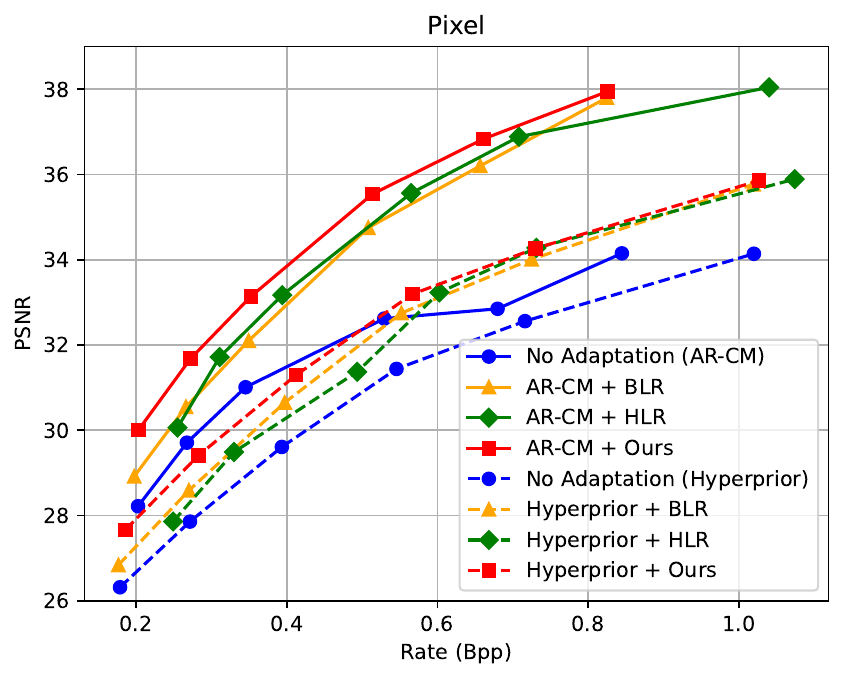}
    \vskip-5pt
    \subcaption{}
  \end{minipage}
  \begin{minipage}{0.32\hsize}
    \centering
    \includegraphics[width=\hsize]{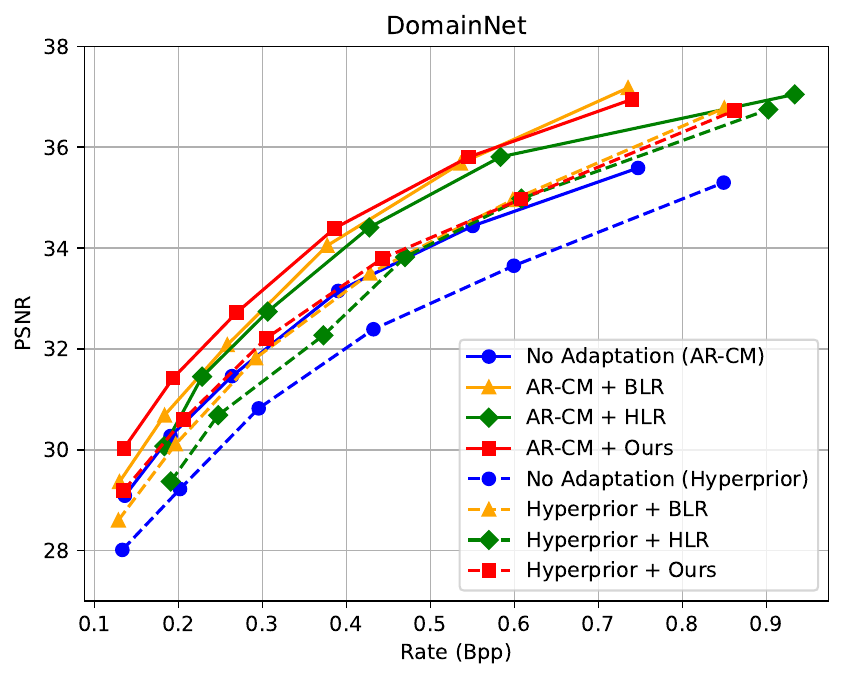}
    \vskip-5pt
    \subcaption{}
  \end{minipage}
  \caption{R-D curves on six datasets using different latent refinement methods. Two different base model architectures including AR-CM and Hyperprior are used.}
  \label{fig:rd results latent refinements}
\end{figure*}

\textbf{Implementation Details.} We use CompressAI~\citep{begaint2020compressai} to implement our proposed and baseline methods.  Two widely-used backbone models are adopted, including the base hyperprior-based entropy model proposed by \citet{balle2018variational}, namely Hyperprior, and the autoregressive context-based entropy model adopted by \citet{cheng2020learned}, namely AR-CM. Both are trained on natural images with various $\lambda$ settings. We use the pre-trained models provided by CompressAI. For TTA-IC, we use the same value of hyperparameter $\lambda=[0.0018, 0.0035, 0.0067, 0.013, 0.025, 0.048]$ for latent refinement. The Adam optimizer is utilized to update the latent variables in a learning rate of 1$\times10^{-3}$ with 2000 iterations. $T$  is empirically set to 20 for MC sampling. We discuss different implementations and hyperparameter settings (\textit{e.g.,} $\beta$) of dropout variational inference in sec. \ref{in-depth}.
\begin{table*}[!t]
\centering
  \caption{Comparison of our method with existing latent refinement approaches. The evaluation is measured in terms of average BD-rate savings (\% $\downarrow$) using the respective base models (without test-time adaptation). Smaller values indicate superior performance.}
  \label{tab:latent refinement}
  \renewcommand\arraystretch{1.2}
   \begin{adjustbox}{max width=0.8\textwidth}
  \begin{tabular}{c|c|cccccc}
    \toprule
    \multirow{2}{*}{Method} &In-domain dataset &\multicolumn{6}{c}{Cross-domain datasets}  \\ 
     & Kodak & SIQAD & SCID & CCT & Pixel & DomainNet & Average$\downarrow$
    \\ \midrule
    \rowcolor{gray!25}  AR-CM~\citep{cheng2020learned}                     
    & 0 & 0 & 0 & 0 & 0 & 0 & 0  \\
    + BLR                     
    & -6.75 & -18.75 &-18.31  &  -25.33 & -26.22  & -18.78 & -21.41  \\
    \rowcolor{gray!15} + HLR            
    & -12.73 & -21.28 &  -15.11 & -23.08 & -31.76 & -15.61 & -21.36   \\
    + Ours            
    & \textbf{-16.11}  & \textbf{-24.71} & \textbf{-22.63} & \textbf{-28.26} & \textbf{-35.89} & \textbf{-23.52} & \textbf{-27.00} \\ \hline \hline 
    \rowcolor{gray!25}  Hyperprior~\citep{balle2018variational}                     
    & 0 & 0 & 0 & 0 & 0 & 0 & 0  \\
    + BLR                     
    & -6.59 & -13.16 & -16.71 & -18.91 & -15.12 & -22.99 & -15.41  \\
    \rowcolor{gray!15}  + HLR            
    & -11.32 & -13.94 & -14.59 & -13.82 & -18.89 & -16.68 & -14.87 \\
    + Ours            
    & \textbf{-14.82}  & \textbf{-20.95} & \textbf{-23.51} & \textbf{-23.58} & \textbf{-22.22} & \textbf{-23.32} & \textbf{-20.78}  \\
  \bottomrule
\end{tabular}
\end{adjustbox}
\end{table*}
\begin{table*}
  \caption{Integration of our method with the SOTA TTA-IC approaches. The evaluation is measured in terms of average BD-rate savings (\% $\downarrow$) using the respective base models (without test-time adaptation). Smaller values indicate superior performance. $^{\ddagger}$  Matrix decomposition-based adaptor.  $^{\star}$ Entropy
efficient adapter. Stages 1 and 2 denote the latent refinement and decoder adaptation phases.}
\centering
  \label{tab: sota tta-ic}
  \renewcommand\arraystretch{1.3}
   \begin{adjustbox}{max width=0.85\textwidth}
  \begin{tabular}{c|cc|c|c|ccccc|c}
    \toprule
  \multirow{2}{*}{\textbf{Method}}   & & \multirow{2}{*}{\textbf{Type}} &\multirow{2}{*}{\textbf{ID}} &\textbf{In-domain dataset} &\multicolumn{5}{c|}{\textbf{Cross-domain datasets}} &\multirow{2}{*}{\textbf{Average}$\downarrow$} \\ 
    &  &  & & Kodak & SIQAD & SCID & CCT & Pixel & DomainNet &  
    \\ \midrule
   \multirow{5}{*}{\citet{tsubota2023universal}} &   &   WACNN~\citep{zou2022devil}  &                    
    & 0 & 0 & 0 & 0 & 0 & 0 & 0  \\ \cline{2-11}
    &  \multirow{2}{*}{\textbf{Stage 1}} &  + HLR$^{\dag}$ &  (a)                    
    & -3.50 & -13.20 &  -12.34  &  -17.15 &  -11.08 & -2.94 &  -10.02 \\
    & &  + Ours   &  (b)          
    & \textbf{-5.45} & \textbf{-14.32} & \textbf{-14.62}  &  \textbf{-18.95} & \textbf{-13.01}  & \textbf{-5.46} & \textbf{-12.01}  \\ \cline{2-11}
    & \multirow{2}{*}{\textbf{Stage 2}} & (a)  + Adaptor1$^{\ddagger}$ &       (c)               
    & -3.51 & -20.98 & -20.73 &  -24.90  &  -17.23 & -2.09 & -14.89  \\
    & &  (b) +Adaptor1$^{\ddagger}$ & (d)              
    & \textbf{-5.48} & \textbf{-22.70} & \textbf{-22.65}  & \textbf{-26.60} & \textbf{-19.37} & \textbf{-4.51} & \textbf{-16.95}   \\
    \hline 
   \multirow{5}{*}{\citet{shen2023dec}} &  &    WACNN~\citep{zou2022devil}  &                   
    & 0 & 0 & 0 & 0 & 0 & 0 & 0  \\ \cline{2-11}
    & \multirow{2}{*}{\textbf{Stage 1}} & + BLR  &  (e)                  
    & 3.83  & -10.98 & -9.96 & -13.05 & -3.13  & 4.83 &  -4.74 \\
    & &    + Ours & (f)             
    & \textbf{-5.45} & \textbf{-14.32} & \textbf{-14.62}  &  \textbf{-18.95} & \textbf{-13.01}  & \textbf{-5.46} &  \textbf{-12.01} \\ \cline{2-11}
     & \multirow{2}{*}{\textbf{Stage 2}} & (e) + Adaptor2$^{\star}$ &           (g)
    & 2.63 & -18.91 & -18.74 & -22.17 & -11.88 & 3.41 & -10.94 \\
    &  &   (f) + Adaptor2$^{\star}$  &     (h)     
    & \textbf{-6.01} & \textbf{-20.23} & \textbf{-21.02}  & \textbf{-24.23} & \textbf{-14.99} & \textbf{-5.59} &  \textbf{-15.26} \\  
  \bottomrule
\end{tabular}
\end{adjustbox}
\end{table*}

\textbf{Baselines.}
For latent refinement methods, we adopt two common baselines: (i) BLR~\citep{campos2019content} conducts content adaptation by optimizing the latent variable $y$ using Eq. (\ref{blr}). \citep{shen2023dec}
combine the BLR and decoder adaptation to achieve SOTA TTA-IC performance on screen content images. (ii) HLR~\citep{yang2020improving} concurrently optimizes the latent variable $y$ and side information $z$ by jointly minimizing the amortization and discretization gaps, as formulated in Eq. (\ref{hlr}). \citet{tsubota2023universal} and \citet{lv2023dynamic} impose the HLR and the PETL to fulfill SOTA TTA-IC performance on various target domains. For HLR, we follow \citet{yang2020improving} to use a temperature annealing schedule with defaulted hyperparameters, where $\tau_{0}=0.5$, $c_{0}=0.001$, and $t_{0}=700$. 

\textbf{Plug-and-play Validation.} We also replace different latent refinement methods of existing SOTA TTA-IC approaches~\citep{tsubota2023universal,shen2023dec} with our proposed latent refinement counterpart in a plug-and-play manner. We follow their defaulted hyperparameter settings.

\subsection{Comparison with Latent Refinement Methods}
\textbf{Rate-Distortion (R-D) Performance.} We compare our proposed method with BLR and HLR using two different pre-trained backbone models, including Hyperprior and AR-CM. Note that the model without adaptation is also involved, namely No Adaptation.  
The peak signal-to-noise ratio (PSNR) and the data rate in
bits per pixel (bpp) on different quality levels (refers to different $\lambda$ levels) are calculated to evaluate the R-D performance of different methods. Then, the R-D curves can be plotted. The results can be found in Figure \ref{fig:rd results latent refinements}. We can observe that all latent refinement methods achieve comparable performance, which may be reasonable as the Kodak dataset can be regarded as the in-domain data without domain shifts. The gains of HLR and our proposed method mainly derive from the minimization of the discretization gap compared with BLR. For out-of-domain images such as SIQAD, SCID, CCT, Pixel, and DomainNet, our proposed method outperforms other approaches with a clear margin regardless of different backbones. Especially, a better R-D performance on the low-bit conditions can be observed for AR-CM-based realizations on the Pixel dataset.

\textbf{Bjøntegaard Delta bit-rate (BD-rate) Performance}. We compute the BD-rate~\citep{Bjntegaard2001CalculationOA}, which is utilized to compare the
performance of two visual encoders at the same bit rate. A lower
BD rate demonstrates better performance. The baseline model without adaptation is the anchor model for BD-rate calculation. The results are shown in Table \ref{tab:latent refinement}. Some observations can be summarized. First, the HLR and our proposed method outperform the BLR with a clear margin for the in-domain Kodak dataset, which coincides with \citet{yang2020improving}' results. However, on the cross-domain tasks, the HLR suffers from significant degradations compared with BLR, where HLR performs below the HLR on three out of five tasks, \textit{e.g.,} SCID, CCT, and DomainNet. This is reasonable as the static entropy bottleneck correlated with the distribution property of natural images cannot model the true distribution of latent variable $z$ well. Finally, our method can achieve obvious gains on all cross-domain tasks, when the proposed distribution regularization is plugged into vanilla HLR.
\begin{figure*}[!t]
    \centering
    \begin{minipage}{0.24\hsize}
    \centering
    \includegraphics[width=\hsize]{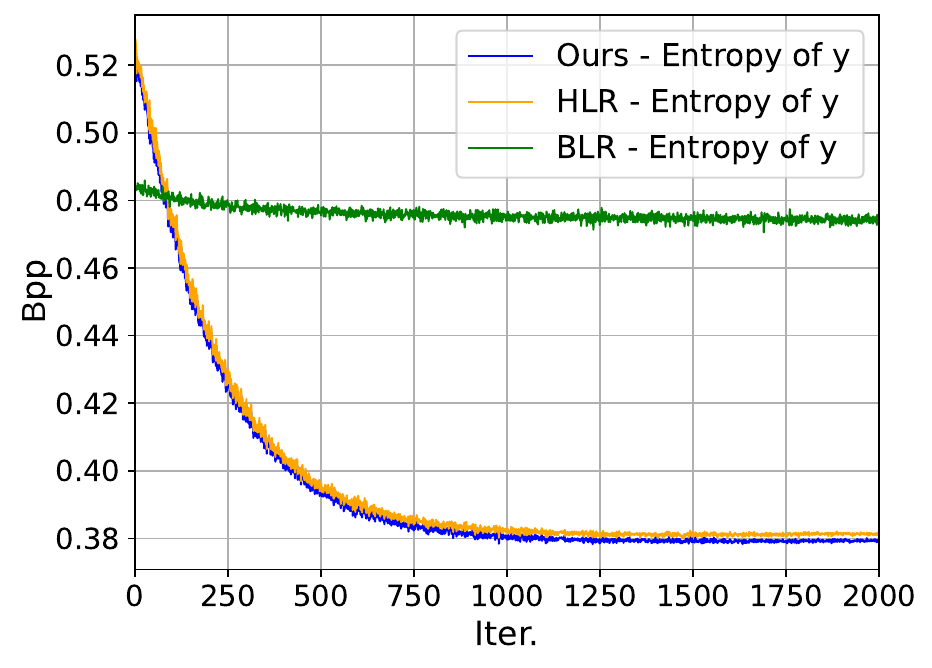}
    \vskip-5pt
    \subcaption{}
  \end{minipage}
  \begin{minipage}{0.24\hsize}
    \centering
    \includegraphics[width=\hsize]{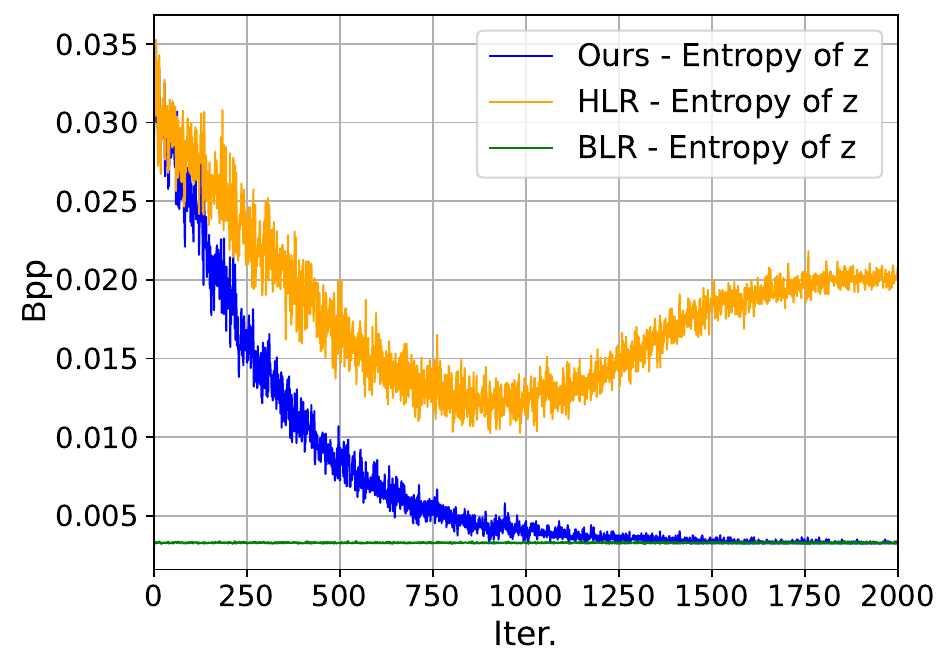}
    \vskip-5pt
    \subcaption{}
  \end{minipage}
  \begin{minipage}{0.24\hsize}
    \centering
    \includegraphics[width=\hsize]{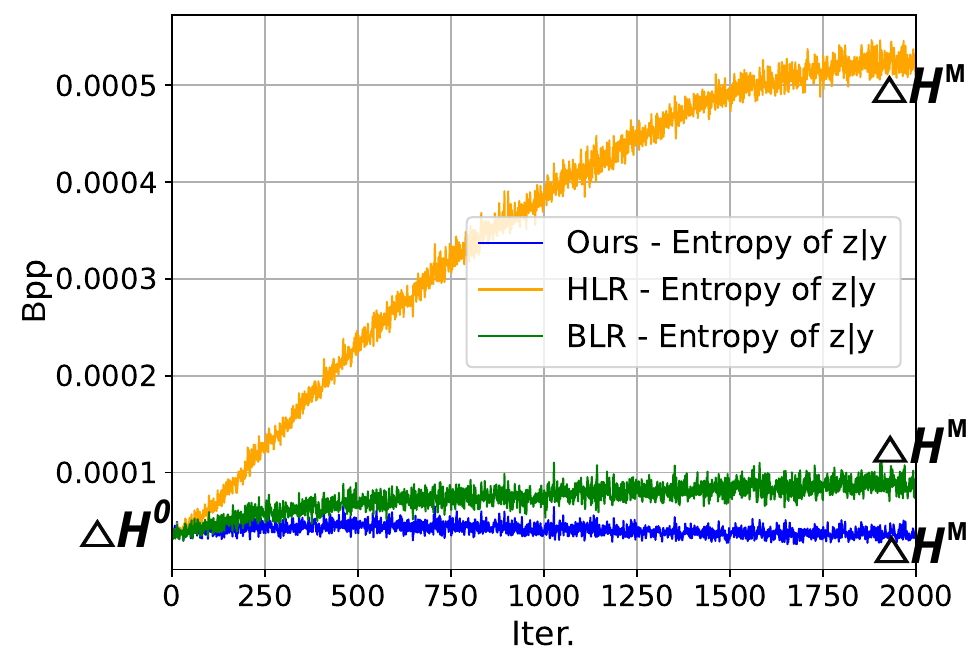}
    \vskip-5pt
    \subcaption{}
  \end{minipage}
  \begin{minipage}{0.24\hsize}
    \centering
    \includegraphics[width=\hsize]{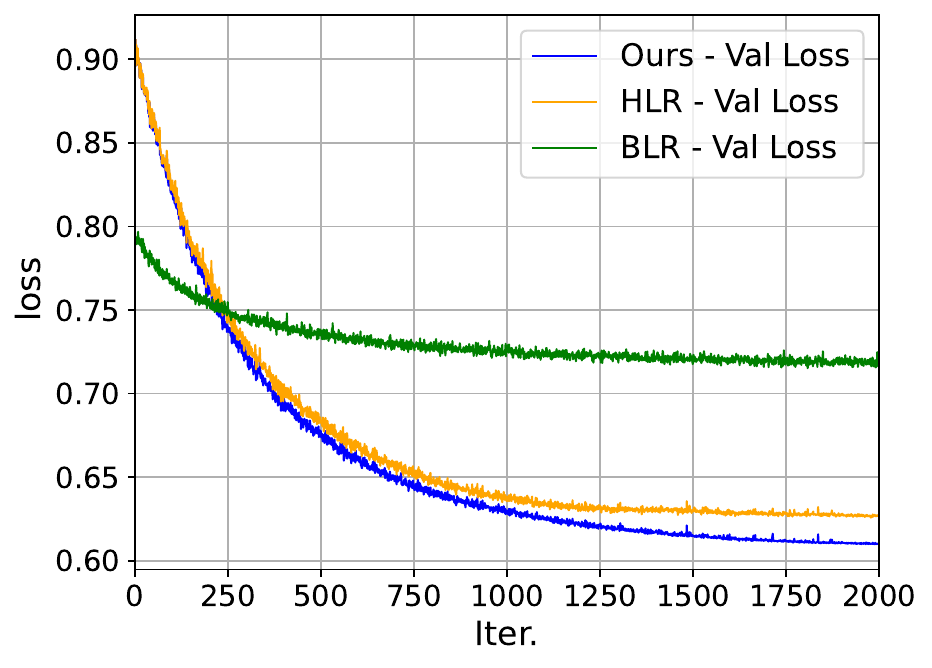}
    \vskip-5pt
    \subcaption{}
  \end{minipage}
\caption{Entropy curves of different probabilities with the iteration $t$, including (a) $-\log p(y^{m})$ (b)$-\log p(z^{m})$ (c) $-\log p(z^{m}|y^{m})$ (d) validation loss as calculated in Eq. (\ref{rd-in-domain}).}
\label{entropy}
\end{figure*}
\begin{figure}[!t]
\centering
\sbox{\measurebox}{%
  \begin{minipage}[b]{.48\textwidth}
  \subfloat
  \centering
    {\label{fig:figA}\includegraphics[width=0.99\textwidth]{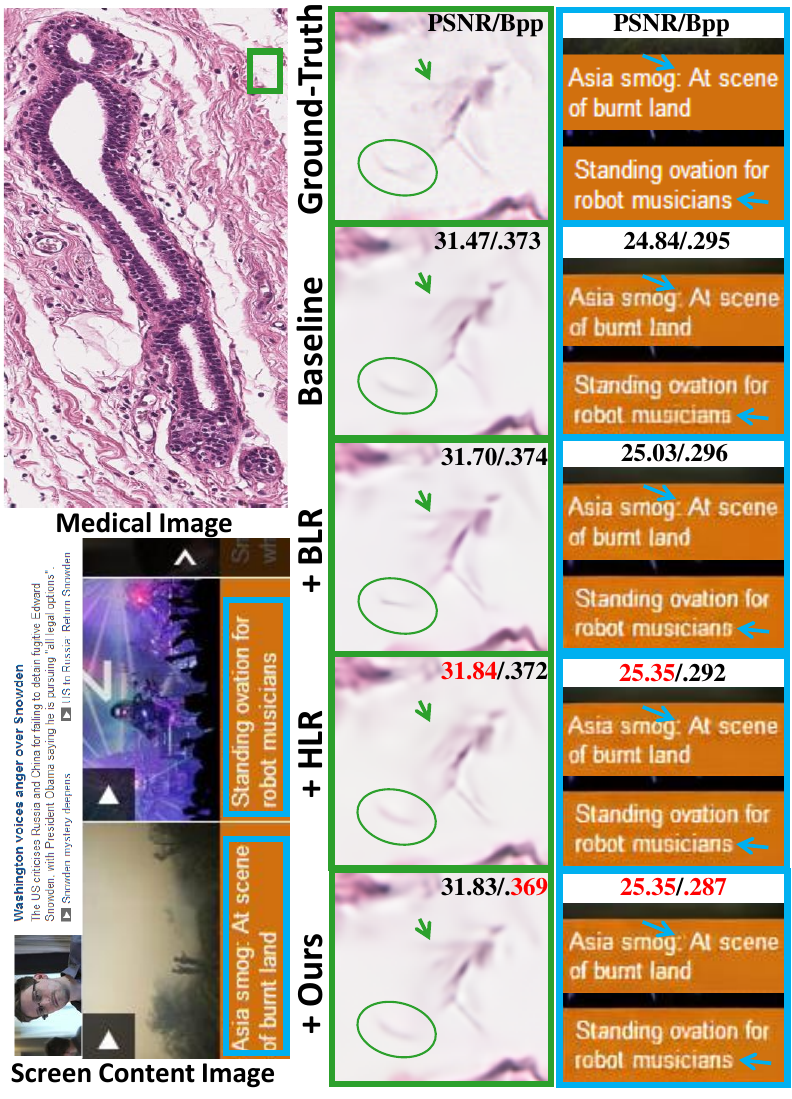}}
    \captionof{figure}{Qualitative results for different domains.}
    \label{visual}
  \end{minipage}}
\usebox{\measurebox}\quad
\begin{minipage}[b][\ht\measurebox][s]{.48\textwidth}
\centering
\subfloat
{\label{fig:figB}\includegraphics[width=0.65\textwidth]{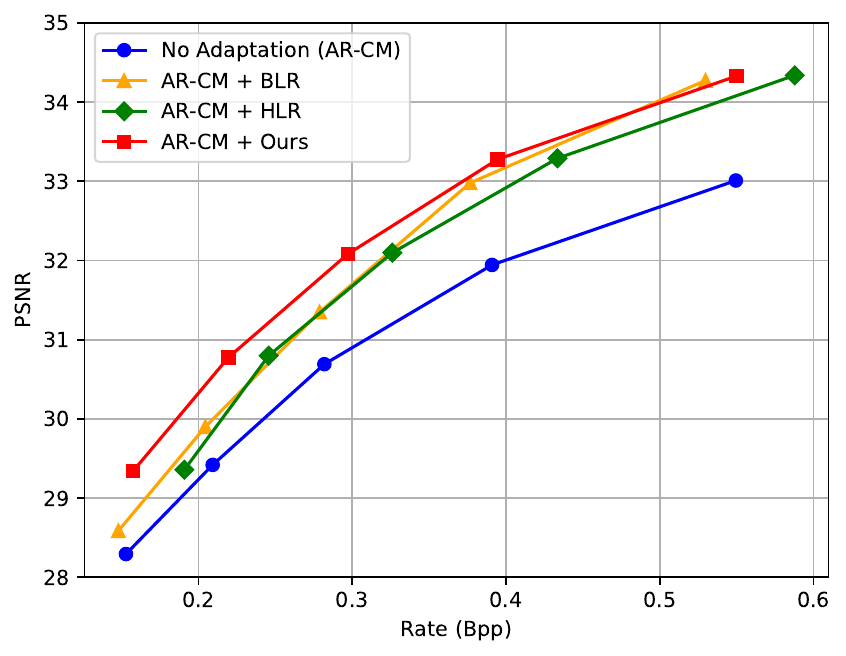}}
\captionof{figure}{R-D performance on challenging medical images, where pathological breast cancer images are used for cross-domain compression~\citep{niazi2019pathological}. }
		\label{loss}
\hfill 
\vspace{-0.3cm}
\subfloat
  \centering
		\renewcommand{\arraystretch}{1.}
		\captionof{table}{Ablation study of proposed distribution regularization $\mathcal{L}_{DR}$ in terms of BD-rate (\% $\downarrow$) on SIQAD. DL denotes the number of the dropout layer. $^{\dag}$: assume $\sigma=1$ without Bayesian approximation. $^{\ddagger}$: minimizing Eq. (\ref{bpp}).} 
  \vspace{0.1cm}
		\label{ablation}
        \setlength{\tabcolsep}{3pt}
		\begin{adjustbox}{width=0.95\textwidth}
		\begin{tabular}{ccccc}
\hline
\multicolumn{1}{c|}{Baseline (HLR)}                                                      & \multicolumn{4}{c}{0: As an anchor}     \\ \cline{1-5} 
\multicolumn{1}{c|}{\multirow{4}{*}{+ proposed $\mathcal{L}_{DR}$}} &  
\multicolumn{1}{c|}{\diagbox{DL}{$\beta$}}   & 0.1 & 1 & 10  \\  \cline{2-5}
\multicolumn{1}{c|}{}                                               & \multicolumn{1}{c|}{1}   &  -10.01      &  -10.02     &    -7.07   \\
\multicolumn{1}{c|}{}  & \multicolumn{1}{c|}{2}    &   -10.03    &   -9.62    &  -5.25   \\
\multicolumn{1}{c|}{}                                               & \multicolumn{1}{c|}{3}   & \cellcolor{green!20}{\textbf{-10.06}}       &   -9.61    &  -4.95     \\ \hline\multicolumn{1}{c|}{+ Deterministic $\mathcal{L}_{DR}^{\dag}$ }                                             & \multicolumn{4}{c}{-3.54} \\ 
\hline\multicolumn{1}{c|}{+ BBC \citep{yang2020improving}$^{\ddagger}$}                                               & \multicolumn{4}{c}{0.15} \\ \hline
\end{tabular}
        \end{adjustbox}

\end{minipage}
\vspace{-1.5cm}
\end{figure}
\subsection{Comparison with SOTA TTA-IC -- Plugging into Distribution Regularization}

Moreover, we replace the existing
latent refinement methods used by SOTA TTA-IC approaches with our proposed method. The full R-D curves are in the \hyperlink{sota tta-ic}{Appendix}. We present the BD-rate in Table \ref{tab: sota tta-ic}. 
It is obvious that our proposed method can improve the performance of these TTA-IC methods on all tasks. For example, when replacing vanilla HLR adopted by \citep{tsubota2023universal} with our proposed distribution regularization-based alternative, the BD-rate score improves from \textbf{-2.94} to \textbf{-5.46} on the DomainNet dataset for the latent refinement stage. Moreover, such gains are roughly maintained at a similar level when adopted by the decoder adaptation. Due to the benefit of hybrid latent refinement, these gains are significantly enlarged for \citet{shen2023dec}. Thus, our proposed method enjoys plug-and-play property, benefiting existing TTA-IC methods.
\begin{figure}[!h]
    \centering
    \caption{Correlation between adaptation performance
and adaptation time (using a single NVIDIA GeForce 3090 GPU) on SIQAD.}
\vspace{-0.15cm}
\renewcommand{\arraystretch}{1.}
	\begin{adjustbox}{width=0.65\textwidth}
		\begin{tabular}{c|llllll}
\toprule
                         & \textbf{Steps} & \multicolumn{1}{c}{\textbf{1}}   & \multicolumn{1}{c}{\textbf{500}} & \multicolumn{1}{c}{\textbf{1000}} & \multicolumn{1}{c}{\textbf{1500}} & \multicolumn{1}{c}{\textbf{2000}} \\ \hline
\multicolumn{1}{c|}{\multirow{3}{*}{\begin{tabular}[c]{@{}c@{}}\textbf{Runtime}\\ \textbf{(Avg. Sec./Img.)}\end{tabular}}} & BLR   &  0.07s &  13.96s   &   27.32s  &    41.52s  &   56.28s   \\
& HLR   &  0.07s &   14.86s  &   28.52s  &   42.51s   &   57.58s   \\
                         &   Ours  & 0.08s  &   15.48s  &  29.25s   & 44.85s     &  59.70s    \\ \midrule
\multirow{3}{*}{\textbf{BD-rate(\%)$\downarrow$}} & BLR   &  \cellcolor{green!20}{\textbf{-1.01}} &  -19.23   &  -20.57   &     -21.04 &    -21.27  \\
& HLR  & -0.92  & -12.75    &  -14.73   &   -19.32   & -23.68     \\ 
                         & Ours  &  -0.95  &   \cellcolor{green!20}{\textbf{-24.33}}  & \cellcolor{green!20}{\textbf{-29.88}}     &   \cellcolor{green!20}{\textbf{-33.15}}   &   \cellcolor{green!20}{\textbf{-34.20}}   \\ \bottomrule
\end{tabular}
        \end{adjustbox}
        \vspace{0.1cm}
        \label{efficiency}
\end{figure} 
\subsection{In-depth Analyses of Our Proposed Method}
\label{in-depth}
We gain an in-depth understanding of the proposed distribution regularization from four views. 

\textbf{Why does the proposed distribution regularization improve the R-D performance?} \textbf{(i)} From theoretical analyses in section \ref{Generalize to cross-domain marginalization approximation}, the domain shifts and the underlying mismatch between $p({z^{m}|\varphi^{s}})$ and $p(y_{t}^{m}|z_{t}^{m},\theta_{h_{e}}^{s})$ may trigger the deteriorated joint probability approximation $p(y_{t}^{M},z_{t}^{M})$ of true marginal distribution $p(y_{t}^{*})$ for vanilla HLR. By empirical experiments in Figure \ref{entropy}, we observe that it is indeed difficult to jointly optimize latent variable $y$ and side information $z$ for vanilla HLR as shown in Figs. \hyperlink{\ref{entropy}}{4(a)} and \hyperlink{\ref{entropy}}{4(b)}, where $-\log p(z^{m})$ converges first and degrades with the iterations while $-\log p(y^{m})$ converges well. In contrast, due to the lack of optimization for slide information $z$, there is limited convergence of $-\log p(y^{m})$ for the BLR while $-\log p(z^{m})$ is unchanged. Thus, as shown in Figure
\hyperlink{\ref{entropy}}{4(c)}, we can find an obvious degradation for vanilla HLR, \textit{i.e.,} $\Delta H^{0} < \Delta H^{\mathrm{M}}$,  reflecting more rate consumption of marginal approximation. This implies that the finally updated joint probability $p(y^{\mathrm{M}},z^{\mathrm{M}})$ is further away from the optimal joint probability $p(y^{*},z^{*})$, and even the initial joint probability $p(y^{0},z^{0})$. \textbf{(ii)} By introducing the proposed distribution regularization, as shown in Figure \hyperlink{\ref{entropy}}{4(c)}, the degradation of joint probability approximation is alleviated well (blue curve) compared with vanilla HLR. Even, our method can achieve good convergence, \textit{i.e.,} $\Delta H^{0} > \Delta H^{\mathrm{M}}$, which is reasonable as we directly minimize $\Delta H$ as shown in Eq. (\ref{approximation durham}). \\
\textit{In short, the proposed distribution regularization encourages the deteriorated joint probability approximation to approach the initial and even unknown optimal ones, leading to good convergence of rate consumption and a better R-D performance in Figure \ref{entropy}.}\\
\textbf{How about the effectiveness of Bayesian approximation for distribution regularization?} To validate the effectiveness of Bayesian approximation for distribution regularization, we first ablate the number of dropout variational inference layers. Intuitively, more dropout layers facilitate a more accurate estimate of posterior distribution, due to more powerful distribution representations. As shown in Table \ref{ablation}, there are marginal gains with the improvement of dropout layers. Moreover, over-regularization, \textit{i.e.,} $\beta=10$, will result in obviously negative effects. Second, we try a deterministic version of our distribution regularization by removing the Bayesian approximation, \textit{i.e.,} assuming $\sigma = 1$ without the dropout layer. As we can see, the performance of deterministic $\mathcal{L}_{DR}$ performs significantly below that of the Bayesian approximation-based one, which is reasonable as the over- or below-estimated variance of the posterior distribution is inaccurate compared with the practical dropout variational inference in Monte Carlo sampling. Finally, we observe negligible gains of the BBC on cross-domain image compression, which coincides with our discussions in section \ref{discussion}.\\
\textbf{How about the adaptation efficiency compared with baseline methods?} As illustrated in Table \ref{efficiency}, we can observe that our proposed method has obvious performance gains in different adaptation steps. In contrast, the additional adaptation time taken by ours is mild in the same adaptation step, which is reasonable as a single instance can parallelly sample $T$ masked weights by one inference in a batch of repeated instances for Bayesian approximation. Moreover, using a more computationally efficient GPU can further reduce the adaptation time in the future.\\
\textbf{What is the ability to generalize for larger distribution discrepancy?}
We explore whether our proposed method can be scalable to more challenging medical images, where the image distribution of medical images is quite different from natural images. The qualitative and quantitative results in Figures. \ref{visual} and \ref{loss} show our method can improve R-D performance with better texture preservation. 
\vspace{-0.2cm}
\section{Conclusion}
\vspace{-0.2cm}
We have approached an advanced latent refinement method by tailoring the vanilla HLR method designed for \textit{in-domain} inference improvement to \textit{cross-domain} cases. Specifically, we have provided a theoretical analysis to uncover the degradation reason of rate cost for the vanilla HLR,  \textit{i.e.,} underlying mismatch between refined Gaussian conditional and hyperprior distributions may trigger the deteriorated joint probability approximation of marginal distribution, leading to increased rate consumption. Then, we introduce a Bayesian approximation-endowed \textit{distribution regularization} to encourage learning better joint probability approximation in a plug-and-play manner. Extensive experiments demonstrate that our proposed method can achieve promising performance.


\bibliography{iclr2025_conference}

\begin{thebibliography}{35}
\providecommand{\natexlab}[1]{#1}
\providecommand{\url}[1]{\texttt{#1}}
\expandafter\ifx\csname urlstyle\endcsname\relax
  \providecommand{\doi}[1]{doi: #1}\else
  \providecommand{\doi}{doi: \begingroup \urlstyle{rm}\Url}\fi

\bibitem[Ball{\'e} et~al.(2016)Ball{\'e}, Laparra, and Simoncelli]{balle2016end}
Johannes Ball{\'e}, Valero Laparra, and Eero~P Simoncelli.
\newblock End-to-end optimized image compression.
\newblock \emph{arXiv preprint arXiv:1611.01704}, 2016.

\bibitem[Ball{\'e} et~al.(2018)Ball{\'e}, Minnen, Singh, Hwang, and Johnston]{balle2018variational}
Johannes Ball{\'e}, David Minnen, Saurabh Singh, Sung~Jin Hwang, and Nick Johnston.
\newblock Variational image compression with a scale hyperprior.
\newblock \emph{arXiv preprint arXiv:1802.01436}, 2018.

\bibitem[B{\'e}gaint et~al.(2020)B{\'e}gaint, Racap{\'e}, Feltman, and Pushparaja]{begaint2020compressai}
Jean B{\'e}gaint, Fabien Racap{\'e}, Simon Feltman, and Akshay Pushparaja.
\newblock Compressai: a pytorch library and evaluation platform for end-to-end compression research.
\newblock \emph{arXiv preprint arXiv:2011.03029}, 2020.

\bibitem[Bj{\o}ntegaard(2001)]{Bjntegaard2001CalculationOA}
Gisle Bj{\o}ntegaard.
\newblock Calculation of average psnr differences between rd-curves.
\newblock 2001.
\newblock URL \url{https://api.semanticscholar.org/CorpusID:61598325}.

\bibitem[Bross et~al.(2021)Bross, Wang, Ye, Liu, Chen, Sullivan, and Ohm]{bross2021overview}
Benjamin Bross, Ye-Kui Wang, Yan Ye, Shan Liu, Jianle Chen, Gary~J Sullivan, and Jens-Rainer Ohm.
\newblock Overview of the versatile video coding (vvc) standard and its applications.
\newblock \emph{IEEE Transactions on Circuits and Systems for Video Technology}, 31\penalty0 (10):\penalty0 3736--3764, 2021.

\bibitem[Campos et~al.(2019)Campos, Meierhans, Djelouah, and Schroers]{campos2019content}
Joaquim Campos, Simon Meierhans, Abdelaziz Djelouah, and Christopher Schroers.
\newblock Content adaptive optimization for neural image compression.
\newblock \emph{arXiv preprint arXiv:1906.01223}, 2019.

\bibitem[Cheng et~al.(2020)Cheng, Sun, Takeuchi, and Katto]{cheng2020learned}
Zhengxue Cheng, Heming Sun, Masaru Takeuchi, and Jiro Katto.
\newblock Learned image compression with discretized gaussian mixture likelihoods and attention modules.
\newblock In \emph{Proceedings of the IEEE/CVF conference on computer vision and pattern recognition}, pp.\  7939--7948, 2020.

\bibitem[Dass et~al.(2023)Dass, Wu, Shi, Li, Ye, Wang, and Lin]{dass2023vitality}
Jyotikrishna Dass, Shang Wu, Huihong Shi, Chaojian Li, Zhifan Ye, Zhongfeng Wang, and Yingyan Lin.
\newblock Vitality: Unifying low-rank and sparse approximation for vision transformer acceleration with a linear taylor attention.
\newblock In \emph{2023 IEEE International Symposium on High-Performance Computer Architecture (HPCA)}, pp.\  415--428. IEEE, 2023.

\bibitem[Djelouah \& Schroers(2019)Djelouah and Schroers]{djelouah2019content}
JCMSA Djelouah and Christopher Schroers.
\newblock Content adaptive optimization for neural image compression.
\newblock In \emph{Proc. IEEE Comput. Soc. Conf. Comput. Vis. Pattern Recognit}, volume~2, 2019.

\bibitem[Gal \& Ghahramani(2016)Gal and Ghahramani]{gal2016dropout}
Yarin Gal and Zoubin Ghahramani.
\newblock Dropout as a bayesian approximation: Representing model uncertainty in deep learning.
\newblock In \emph{international conference on machine learning}, pp.\  1050--1059. PMLR, 2016.

\bibitem[Guo et~al.(2020)Guo, Wang, Cui, Feng, Ge, and Bai]{guo2020variable}
Tiansheng Guo, Jing Wang, Ze~Cui, Yihui Feng, Yunying Ge, and Bo~Bai.
\newblock Variable rate image compression with content adaptive optimization.
\newblock In \emph{Proceedings of the IEEE/CVF Conference on Computer Vision and Pattern Recognition Workshops}, pp.\  122--123, 2020.

\bibitem[Ho et~al.(2019)Ho, Lohn, and Abbeel]{ho2019compression}
Jonathan Ho, Evan Lohn, and Pieter Abbeel.
\newblock Compression with flows via local bits-back coding.
\newblock \emph{Advances in Neural Information Processing Systems}, 32, 2019.

\bibitem[Hu et~al.(2021)Hu, Shen, Wallis, Allen-Zhu, Li, Wang, Wang, and Chen]{hu2021lora}
Edward~J Hu, Yelong Shen, Phillip Wallis, Zeyuan Allen-Zhu, Yuanzhi Li, Shean Wang, Lu~Wang, and Weizhu Chen.
\newblock Lora: Low-rank adaptation of large language models.
\newblock \emph{arXiv preprint arXiv:2106.09685}, 2021.

\bibitem[Jiang et~al.(2023)Jiang, Yang, Zhai, Ning, Gao, and Wang]{jiang2023mlic}
Wei Jiang, Jiayu Yang, Yongqi Zhai, Peirong Ning, Feng Gao, and Ronggang Wang.
\newblock Mlic: Multi-reference entropy model for learned image compression.
\newblock In \emph{Proceedings of the 31st ACM International Conference on Multimedia}, pp.\  7618--7627, 2023.

\bibitem[Kim et~al.(2024)Kim, Bauer, Theis, Schwarz, and Dupont]{kim2024c3}
Hyunjik Kim, Matthias Bauer, Lucas Theis, Jonathan~Richard Schwarz, and Emilien Dupont.
\newblock C3: High-performance and low-complexity neural compression from a single image or video.
\newblock In \emph{Proceedings of the IEEE/CVF Conference on Computer Vision and Pattern Recognition}, pp.\  9347--9358, 2024.

\bibitem[Lam et~al.(2020)Lam, Zare, Cricri, Lainema, and Hannuksela]{lam2020efficient}
Yat-Hong Lam, Alireza Zare, Francesco Cricri, Jani Lainema, and Miska~M Hannuksela.
\newblock Efficient adaptation of neural network filter for video compression.
\newblock In \emph{Proceedings of the 28th ACM International Conference on Multimedia}, pp.\  358--366, 2020.

\bibitem[Lv et~al.(2023)Lv, Xiang, Zhang, Yang, Han, and Yang]{lv2023dynamic}
Yue Lv, Jinxi Xiang, Jun Zhang, Wenming Yang, Xiao Han, and Wei Yang.
\newblock Dynamic low-rank instance adaptation for universal neural image compression.
\newblock In \emph{Proceedings of the 31st ACM International Conference on Multimedia}, pp.\  632--642, 2023.

\bibitem[MacKay(2003)]{mackay2003information}
David~JC MacKay.
\newblock \emph{Information theory, inference and learning algorithms}.
\newblock Cambridge university press, 2003.

\bibitem[Min et~al.(2017)Min, Ma, Gu, Zhai, Wang, and Lin]{min2017unified}
Xiongkuo Min, Kede Ma, Ke~Gu, Guangtao Zhai, Zhou Wang, and Weisi Lin.
\newblock Unified blind quality assessment of compressed natural, graphic, and screen content images.
\newblock \emph{IEEE Transactions on Image Processing}, 26\penalty0 (11):\penalty0 5462--5474, 2017.

\bibitem[Ni et~al.(2017)Ni, Ma, Zeng, Chen, Cai, and Ma]{ni2017esim}
Zhangkai Ni, Lin Ma, Huanqiang Zeng, Jing Chen, Canhui Cai, and Kai-Kuang Ma.
\newblock Esim: Edge similarity for screen content image quality assessment.
\newblock \emph{IEEE Transactions on Image Processing}, 26\penalty0 (10):\penalty0 4818--4831, 2017.

\bibitem[Niazi et~al.(2019)Niazi, Lin, Liu, Ashok, Marcellin, Tozbikian, Gurcan, and Bilgin]{niazi2019pathological}
M~Khalid~Khan Niazi, Yuzhang Lin, Feng Liu, Amit Ashok, Michael~W Marcellin, Gary Tozbikian, MN~Gurcan, and Ali Bilgin.
\newblock Pathological image compression for big data image analysis: Application to hotspot detection in breast cancer.
\newblock \emph{Artificial intelligence in medicine}, 95:\penalty0 82--87, 2019.

\bibitem[Peng et~al.(2019)Peng, Bai, Xia, Huang, Saenko, and Wang]{peng2019moment}
Xingchao Peng, Qinxun Bai, Xide Xia, Zijun Huang, Kate Saenko, and Bo~Wang.
\newblock Moment matching for multi-source domain adaptation.
\newblock In \emph{Proceedings of the IEEE/CVF international conference on computer vision}, pp.\  1406--1415, 2019.

\bibitem[Ruan et~al.(2021)Ruan, Ullrich, Severo, Townsend, Khisti, Doucet, Makhzani, and Maddison]{ruan2021improving}
Yangjun Ruan, Karen Ullrich, Daniel~S Severo, James Townsend, Ashish Khisti, Arnaud Doucet, Alireza Makhzani, and Chris Maddison.
\newblock Improving lossless compression rates via monte carlo bits-back coding.
\newblock In \emph{International Conference on Machine Learning}, pp.\  9136--9147. PMLR, 2021.

\bibitem[Shen et~al.(2023)Shen, Yue, and Yang]{shen2023dec}
Sheng Shen, Huanjing Yue, and Jingyu Yang.
\newblock Dec-adapter: Exploring efficient decoder-side adapter for bridging screen content and natural image compression.
\newblock In \emph{Proceedings of the IEEE/CVF International Conference on Computer Vision}, pp.\  12887--12896, 2023.

\bibitem[Srivastava et~al.(2014)Srivastava, Hinton, Krizhevsky, Sutskever, and Salakhutdinov]{srivastava2014dropout}
Nitish Srivastava, Geoffrey Hinton, Alex Krizhevsky, Ilya Sutskever, and Ruslan Salakhutdinov.
\newblock Dropout: a simple way to prevent neural networks from overfitting.
\newblock \emph{The journal of machine learning research}, 15\penalty0 (1):\penalty0 1929--1958, 2014.

\bibitem[Townsend et~al.(2019)Townsend, Bird, and Barber]{townsend2019practical}
James Townsend, Tom Bird, and David Barber.
\newblock Practical lossless compression with latent variables using bits back coding.
\newblock \emph{arXiv preprint arXiv:1901.04866}, 2019.

\bibitem[Tsubota et~al.(2023)Tsubota, Akutsu, and Aizawa]{tsubota2023universal}
Koki Tsubota, Hiroaki Akutsu, and Kiyoharu Aizawa.
\newblock Universal deep image compression via content-adaptive optimization with adapters.
\newblock In \emph{Proceedings of the IEEE/CVF Winter Conference on Applications of Computer Vision}, pp.\  2529--2538, 2023.

\bibitem[Ulhaq \& Baji\'{c}(2024)Ulhaq and Baji\'{c}]{ulhaq2024encodingdistributions}
Mateen Ulhaq and Ivan~V. Baji\'{c}.
\newblock Learned compression of encoding distributions.
\newblock In \emph{Proc. IEEE ICIP}, 2024.

\bibitem[Vahdat \& Kautz(2020)Vahdat and Kautz]{vahdat2020nvae}
Arash Vahdat and Jan Kautz.
\newblock Nvae: A deep hierarchical variational autoencoder.
\newblock \emph{Advances in neural information processing systems}, 33:\penalty0 19667--19679, 2020.

\bibitem[Van~Rozendaal et~al.(2021)Van~Rozendaal, Huijben, and Cohen]{van2021overfitting}
Ties Van~Rozendaal, Iris~AM Huijben, and Taco~S Cohen.
\newblock Overfitting for fun and profit: Instance-adaptive data compression.
\newblock \emph{arXiv preprint arXiv:2101.08687}, 2021.

\bibitem[Wallace(1991)]{wallace1991jpeg}
Gregory~K Wallace.
\newblock The jpeg still picture compression standard.
\newblock \emph{Communications of the ACM}, 34\penalty0 (4):\penalty0 30--44, 1991.

\bibitem[Yang et~al.(2015)Yang, Fang, and Lin]{yang2015perceptual}
Huan Yang, Yuming Fang, and Weisi Lin.
\newblock Perceptual quality assessment of screen content images.
\newblock \emph{IEEE Transactions on Image Processing}, 24\penalty0 (11):\penalty0 4408--4421, 2015.

\bibitem[Yang et~al.(2020)Yang, Bamler, and Mandt]{yang2020improving}
Yibo Yang, Robert Bamler, and Stephan Mandt.
\newblock Improving inference for neural image compression.
\newblock \emph{Advances in Neural Information Processing Systems}, 33:\penalty0 573--584, 2020.

\bibitem[Zou et~al.(2021)Zou, Zhang, Cricri, Youvalari, Tavakoli, Lainema, Aksu, Hannuksela, and Rahtu]{zou2021adaptation}
Nannan Zou, Honglei Zhang, Francesco Cricri, Ramin~G Youvalari, Hamed~R Tavakoli, Jani Lainema, Emre Aksu, Miska Hannuksela, and Esa Rahtu.
\newblock Adaptation and attention for neural video coding.
\newblock In \emph{2021 IEEE International Symposium on Multimedia (ISM)}, pp.\  240--244. IEEE, 2021.

\bibitem[Zou et~al.(2022)Zou, Song, and Zhang]{zou2022devil}
Renjie Zou, Chunfeng Song, and Zhaoxiang Zhang.
\newblock The devil is in the details: Window-based attention for image compression.
\newblock In \emph{Proceedings of the IEEE/CVF conference on computer vision and pattern recognition}, pp.\  17492--17501, 2022.

\end{thebibliography}
\bibliographystyle{iclr2025_conference}
\newpage
\appendix
\section{Details of adopted datasets}
By following previous literature~\citep{lv2023dynamic,tsubota2023universal,shen2023dec}, we collect six different datasets with four types of image styles to comprehensively evaluate the R-D performance of different approaches on cross-domain TTA-IC tasks, including natural image (Kodak\footnote{\textcolor{Highlight}{https://r0k.us/graphics/kodak/}}), screen content image (SIQAD~\cite{yang2015perceptual}, SCID~\citep{ni2017esim}, CCT~\citep{min2017unified}), pixel-style gaming image (\cite{lv2023dynamic}' self-collected), and painting image (DomainNet~\citep{peng2019moment}) datasets. The details of the used dataset can be found in the Table \ref{dataset}. Specifically, we consider the natural image dataset as in-domain evaluations, and others as cross-domain evaluations. 
\begin{table}[!h]
    \centering
    \caption{The datasets for evaluation. The symbols $*$ and $\dag$ denote in- and cross-domain datasets, respectively.}
 \begin{adjustbox}{max width=0.6\columnwidth}
	\begin{tabular}{cccc}
\hline
\textbf{Dataset}         & \textbf{Description} &  \textbf{\# Num.}  & \textbf{Avg. Resolution}\\ \hline \hline
Kodak$^{*}$   & Natural            & 24 &  576$\times$704   \\
SIQAD$^{\dag}$          & Screen Content &  24    &  685$\times$739  \\
SCID$^{\dag}$ & Screen Content              &  40   &     720$\times$1080 \\
CCT$^{\dag}$ &   Screen Content           & 24     & 915$\times$1627 \\
Pixel$^{\dag}$           & Gaming             & 25   &  746$\times$850\\
DomainNet$^{\dag}$         & Painting             &  25  &   492$\times$640   \\ \hline   
\end{tabular}
\label{dataset}
\end{adjustbox}
\vspace{0.2cm}
\end{table} 
\section{Comparison with SOTA TTA-IC Methods}
\label{sota tta-ic}
We provide the full R-D curves in Figure \ref{fig: appendix rd results latent refinements} when we compared our proposed method with SOTA TTA-IC methods. Note that our proposed method is based on \citep{tsubota2023universal} with our proposed distribution regularization. We can observe that all latent refinement methods achieve comparable performance, which may be reasonable as the Kodak dataset can be regarded as the in-domain data without domain shifts. The gains of HLR and our proposed method mainly derive from the minimization of the discretization gap compared with BLR. For out-of-domain images such as SIQAD, SCID, CCT, Pixel, and DomainNet, our proposed method outperforms other approaches with a clear margin regardless of different backbones. Especially, a better R-D performance on the low-bit conditions can be observed for AR-CM-based realizations on the Pixel dataset.
\begin{figure*}[!t]
  \centering
  \begin{minipage}{0.32\hsize}
    \centering
    \includegraphics[width=\hsize]{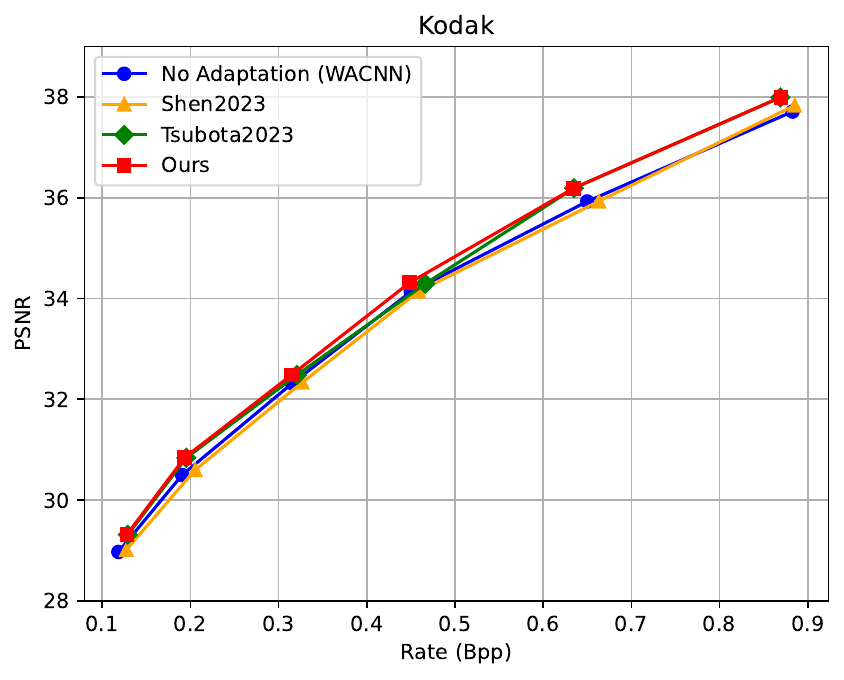}
    \vskip-5pt
    \subcaption{}
  \end{minipage}
  \begin{minipage}{0.32\hsize}
    \centering
    \includegraphics[width=\hsize]{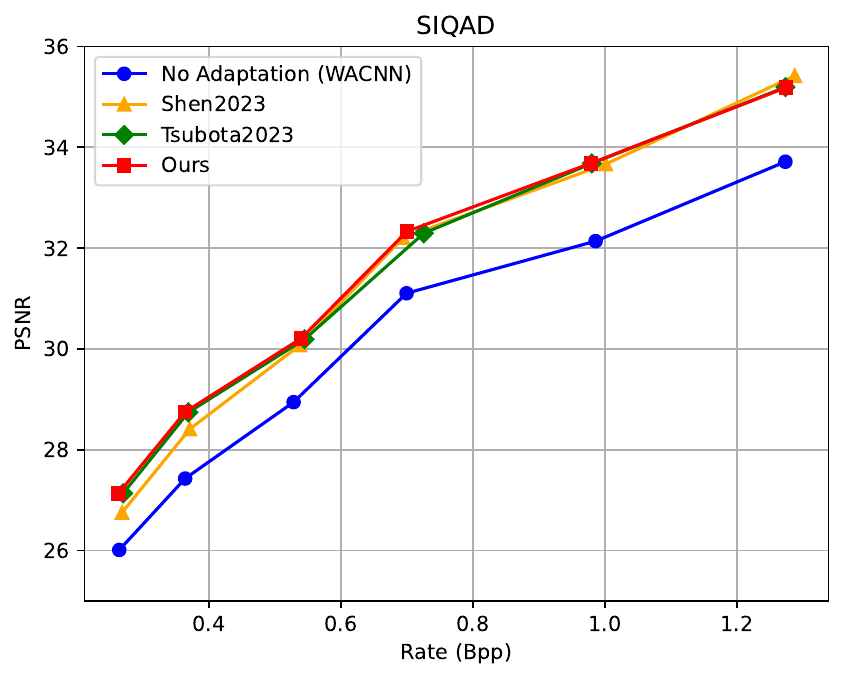}
    \vskip-5pt
    \subcaption{}
  \end{minipage}
  \begin{minipage}{0.32\hsize}
    \centering
    \includegraphics[width=\hsize]{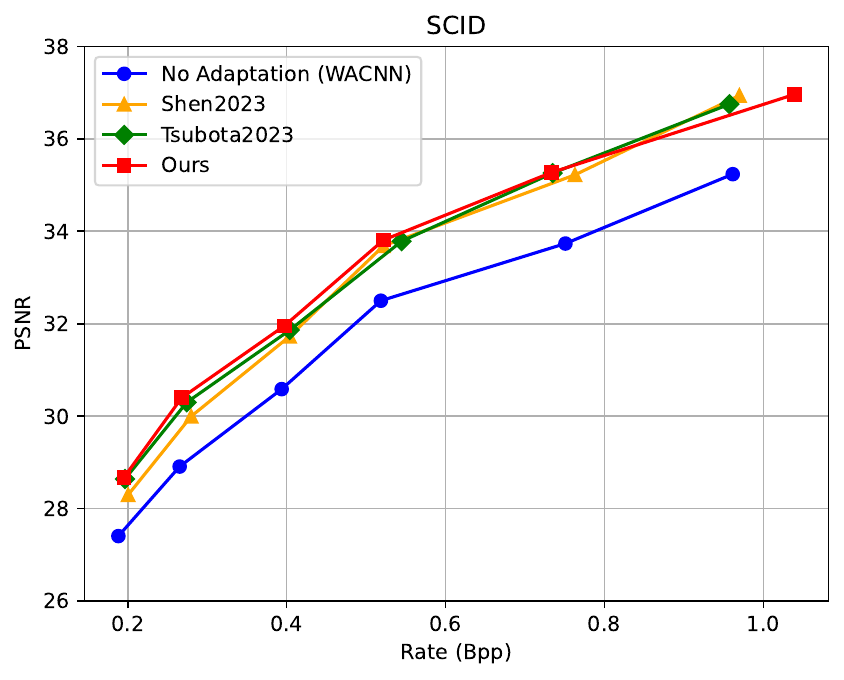}
    \vskip-5pt
    \subcaption{}
  \end{minipage}
  \begin{minipage}{0.32\hsize}
    \centering
    \includegraphics[width=\hsize]{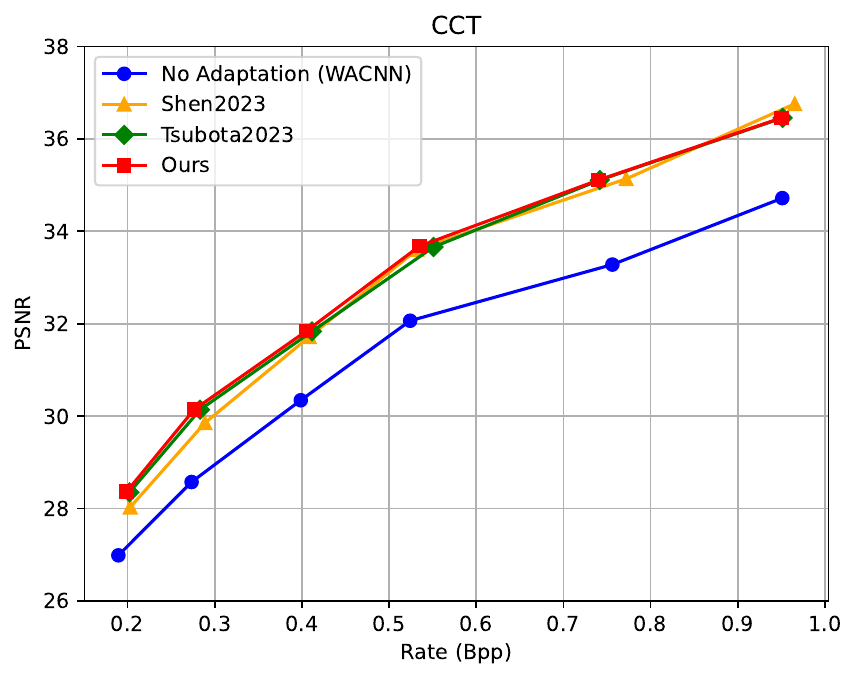}
    \vskip-5pt
    \subcaption{}
  \end{minipage}
  \begin{minipage}{0.32\hsize}
    \centering
    \includegraphics[width=\hsize]{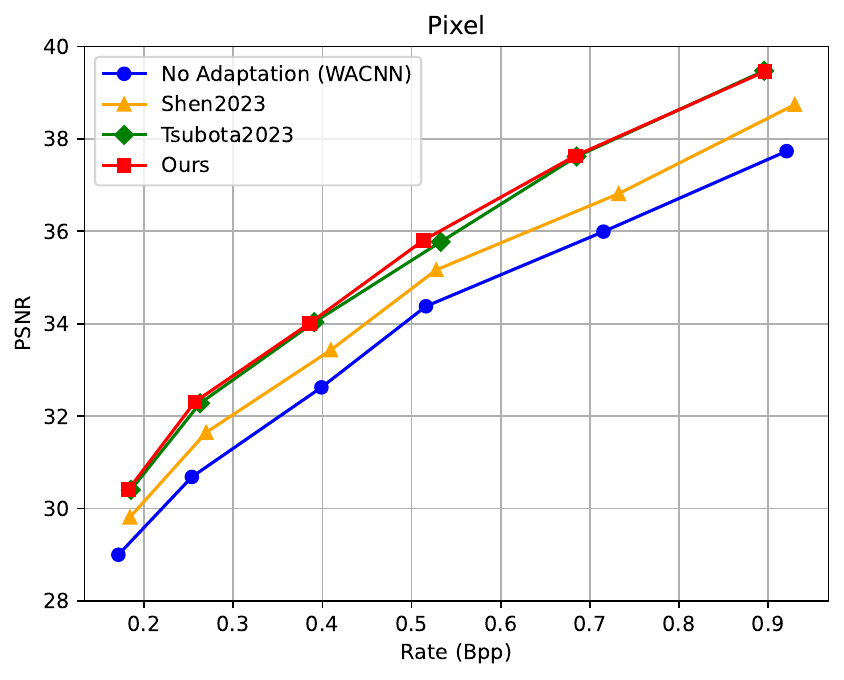}
    \vskip-5pt
    \subcaption{}
  \end{minipage}
  \begin{minipage}{0.32\hsize}
    \centering
    \includegraphics[width=\hsize]{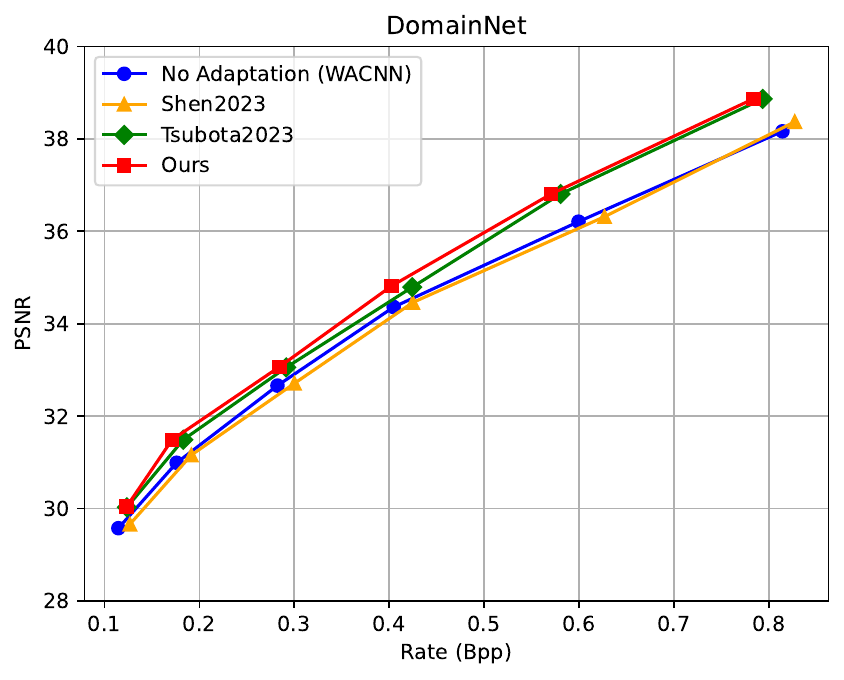}
    \vskip-5pt
    \subcaption{}
  \end{minipage}
  \caption{R-D curves of test-time adaptation of different latent refinement approaches on six datasets using different latent refinement methods. Two different base model architectures including AR-CM and Hyperprior are used.}
  \label{fig: appendix rd results latent refinements}
\end{figure*}
\section{Connection with bit-back coding}
\label{bbc}
1) \textit{In-domain} \textit{v.s.} \textit{Cross-domain}: Both BBC~\citep{townsend2019practical,ruan2021improving,ho2019compression} and our proposed method derive from the joint probability approximation of marginal distribution. However, BBC usually specializes in \textit{in-domain} image compression to narrow the marginalization gap, \textit{i.e.,} transforming learned optimal joint probability to true marginal probability at compression time. 
Instead, in the context of \textit{cross-domain} image compression, such optimal joint probability does not hold due to mismatched encoding distribution (\textit{e.g.,} entropy bottleneck), as discussed in Eqs. (\ref{practical bit}) and (\ref{delta h}). 

2) \textit{Minimizing posterior probability v.s. Maximizing posterior probability}: BBC minimizes the second term of the first line of Eq. (\ref{delat h-1}) as the true entropy of the marginal distribution, \textit{i.e.,}
\begin{equation}
    \mathcal{L}_{BBC}=-\log p(y_{t}^{m}|z_{t}^{m},\theta_{h_{e}}^{s}) -\log p(z_{t}^{m}|\varphi^{s}) - (-\log p(z^{m}_{t}|y_{t}^{m})) + \lambda (-\log p(x_{t}|y_{t}^{m})), \label{bpp2}
\end{equation}
where the third term corresponds to minimizing the posterior probability based on the in-domain optimal joint probability assumption of BBC. In contrast, our proposed method focuses on refining deteriorated joint probability to optimal one by minimizing the extra rate consumption of marginal approximation, as discussed in Eq. (\ref{approximation durham}). 

3) \textit{Performance and implementation differences}: Our experiments in Table \ref{ablation} observe negligible gains of BBC on cross-domain image compression. More importantly, in order to acquire distribution property, our proposed distribution regularization relies on more flexible variational Bayesian inference rather than introducing additional networks like BBC~\citep{yang2020improving}.
\end{document}